\documentclass[onefignum,onetabnum]{siamonline250211}

\usepackage{lipsum}
\usepackage{amsfonts}
\usepackage{graphicx}
\usepackage{epstopdf}
\usepackage{booktabs}
\usepackage{multirow}
\usepackage{subcaption}
\ifpdf
  \DeclareGraphicsExtensions{.eps,.pdf,.png,.jpg}
\else
  \DeclareGraphicsExtensions{.eps}
\fi

\usepackage{enumitem}
\setlist[enumerate]{leftmargin=.5in}
\setlist[itemize]{leftmargin=.5in}

\newsiamremark{remark}{Remark}
\newsiamremark{hypothesis}{Hypothesis}
\crefname{hypothesis}{Hypothesis}{Hypotheses}
\newsiamthm{claim}{Claim}
\newsiamremark{fact}{Fact}
\crefname{fact}{Fact}{Facts}

\headers{Operator Learning for Smoothing and Forecasting}{E. Calvello, E. Carlson, N. Kovachki, M.N. Manta and A.M. Stuart}

\title{Operator Learning for Smoothing and Forecasting}%\thanks{Submitted to the editors DATE.
\author{Edoardo Calvello\thanks{California Institute of Technology, Pasadena, CA 
  (\email{e.calvello@caltech.edu}).}
\and Elizabeth Carlson\thanks{Oregon State University, Corvallis, OR 
  (\email{carleliz@oregonstate.edu}).}
\and Nikola Kovachki\footnotemark[3]\thanks{NVIDIA Corporation, Santa Clara, CA 
  (\email{nkovachki@nvidia.com}).}
\and Michael N. Manta\thanks{Stanford University, Stanford, CA 
  (\email{mnmanta@stanford.edu}).}
\and \newline Andrew M. Stuart\thanks{California Institute of Technology, Pasadena, CA 
  (\email{astuart@caltech.edu}).}
  }

\usepackage{amsopn}

\usepackage{bbm}
\usepackage{stmaryrd}
\usepackage[table,svgnames]{xcolor} % Required for \rowcolor
\usepackage{hhline}
\newcommand{\R}{\mathbbm{R}}
\newcommand{\Z}{\mathbbm{Z}}

\newcommand{\N}{\mathbbm{N}}

\newcommand{\cU}{\mathcal{U}}
\newcommand{\cV}{\mathcal{V}}
\newcommand{\cZ}{\mathcal{Z}}
\newcommand{\cG}{\mathcal{G}}

\newcommand{\cL}{\mathcal{L}}

\newcommand{\cW}{\mathcal{W}}

\newcommand{\sC}{\mathsf{C}}

\newcommand{\sL}{\mathsf{L}}
\newcommand{\sT}{\mathsf{T}}
\newcommand{\sF}{\mathsf{F}}
\newcommand{\sE}{\mathsf{E}}

\newcommand{\rd}{{\mathrm d}}
\newcommand{\dK}{d_\textrm{\scalebox{.7}{$K$}}}

\newcommand{\bbR}{\mathbb{R}}

\newcommand{\bbE}{\mathbb{E}}

\newcommand{\placeholder}{\mathord{\color{black!33}\bullet}}
\newcommand{\placeholderB}{\mathord{\color{black}\bullet}}

\usepackage{amsmath}  % for \DeclareMathOperator macro

\usepackage{algpseudocode}% http://ctan.org/pkg/algorithmicx

\usepackage{amsmath}
\usepackage{amssymb}
\usepackage{amsfonts}
\usepackage{dsfont}
\usepackage{mathtools}

\usepackage{amssymb}

\usepackage{microtype}

\usepackage[table,svgnames]{xcolor}         % colorse1

\usepackage{soul}

\definecolor{darkred}{rgb}{.7,0,0}
\definecolor{grey}{rgb}{.7,.6,.5}

\definecolor{darkgreen}{rgb}{0.05, 0.5, 0.06}

\definecolor{darkred}{rgb}{0.5, 0.05, 0.06}

\usepackage[normalem]{ulem}
\newcommand\mlremove{\bgroup\markoverwith
{\textcolor{red}{\rule[.5ex]{2pt}{1pt}}}\ULon}
\newcommand\asremove{\bgroup\markoverwith
{\textcolor{cyan}{\rule[.5ex]{2pt}{1pt}}}\ULon}
\newcommand\agreeremove{\bgroup\markoverwith
{\textcolor{blue}{\rule[.5ex]{2pt}{1pt}}}\ULon}

\usepackage{amsmath,amsbsy,amsgen,amscd,amsfonts} %amsthm
\usepackage{mathrsfs} % using this for \mathscr{g}
\usepackage{bbold}
\usepackage{csquotes}

\providecommand{\mathbbm}{\mathbb} % In case we don't load bbm

\newcommand{\sD}{\mathsf{D}}
\newcommand{\fp}{\mathfrak{p}}
\newcommand{\fq}{\mathfrak{q}}

\newcommand{\fx}{\mathfrak{x}}

\newtheorem{assumption}{Assumption}[section]
\newtheorem{example}{Example}[section]

\newlist{contributions}{enumerate}{5}
\setlist[contributions]{font={\bfseries}}
\crefname{contributionsi}{Contribution}{Contributions}

\newlist{probenum}{enumerate}{5}
\setlist[probenum]{font={\bfseries}}
\crefname{probenumi}{Problem}{Data Assimilation Problems}

\ifpdf
\hypersetup{
  pdftitle={An Example Article},
  pdfauthor={D. Doe, P. T. Frank, and J. E. Smith}
}
\fi

\begin{document}

\maketitle
% REQUIRED
\begin{abstract}
Machine learning has opened new frontiers in purely data-driven algorithms for data assimilation in,
and for forecasting of, dynamical systems; the resulting methods are showing some promise.
However, in contrast to model-driven
algorithms, analysis of these data-driven methods is poorly developed. 
In this paper we address this issue, developing a theory to underpin
data-driven methods to solve smoothing problems arising in data assimilation
and forecasting problems. The theoretical framework relies on two key components: (i) establishing
the existence of the mapping to be learned; (ii) the properties of 
the operator learning architecture used to approximate this mapping. 
By studying these two components in conjunction, we establish novel
universal approximation theorems for purely data-driven algorithms for both smoothing
and forecasting of dynamical systems. We work in the continuous time setting, hence deploying
neural operator architectures. The theoretical results are illustrated
with experiments studying the Lorenz `63, Lorenz `96 and Kuramoto-Sivashinsky
dynamical systems.
\end{abstract}

% REQUIRED
\begin{keywords} Data Assimilation, Smoothing, Forecasting, 
Neural Operators, Universal Approximation.  
\end{keywords}

% REQUIRED
%\begin{MSCcodes}

%\end{MSCcodes}

\section{Introduction}
\label{sec:intro}

Machine learning has opened new frontiers in data assimilation and forecasting in dynamical systems,
ranging from methods which improve on traditional model-driven 
algorithms, through to new purely data-driven methods. The goal
of this paper is to establish a mathematical approach to the study
of the purely data-driven methods. To this end, we consider the general dynamical system given by
\begin{subequations}
\label{eq:dynamical_system_intro}
\begin{align}
        \label{eq:dynsys_f}
        \dot{p} &= f(p, q), \qquad p(0) = p_0 \in \R^{d_p}, \\
        \label{eq:dynsys_g}
        \dot{q} &= g(p, q), \qquad q(0) = q_0 \in \R^{d_q}.
\end{align}
\end{subequations}
Our focus is on two problems: (i) existence, universal approximation, and approximation from data
of the map from observed $\{p(t)\}_{t\in [0,T]}$ to unobserved $\{q(t)\}_{t\in [0,T]}$; (ii)
existence, universal approximation, and approximation from data of the map from  observed $\{p(t)\}_{t\in [0,T]}$ to unobserved $\{p(t)\}_{t\in [T,T+\tau]}$.

\subsection{Context and Literature Review}
\label{sec:CLR}

In the context of \eqref{eq:dynamical_system_intro} there are three problem classes which naturally arise: (I) \textit{smoothing}  concerns the offline denoising and estimation of the observed and unobserved  states over some time interval, given observational data in that same time 
interval; an instance of this, which we study in this paper, is the estimation of unobserved 
$\{q(t)\}_{t\in [0,T]}$ from observed $\{p(t)\}_{t\in [0,T]}$; (II) \textit{forecasting} concerns estimation of a state over some future time interval, given observational data over a past time interval; an instance of this, which we also study in this paper, is the estimation of 
$\{p(t)\}_{t\in [T,T+\tau]}$ from observed $\{p(t)\}_{t\in [0,T]}$. Also of importance
is (III) \textit{filtering}, which concerns the online estimation of an underlying state, 
sequentially as observations are received; this, however, is not a focus of the current paper.

The field of data assimilation (DA) is concerned with the use of 
observational data to perform state estimation in dynamical systems (including the estimation of quantities that are 
not observed). Both smoothing and filtering are used in practice, often in conjunction
with forecasting, as in the field of weather prediction. Traditional model-driven techniques combine the observations with knowledge of the underlying dynamical
system to address the three tasks (I--III) \cite{kalnay2002atmoshperic,asch2016data,Evensenetal2022}.
Although primarily developed for geophysical applications in the ocean-atmosphere sciences,
such model-driven methods to address tasks (I--III) have been systematized and now constitute general-purpose methodologies
\cite{reich2015probabilistic,pandya2022review, sanz2023inverse}. In the last few years new purely data-driven algorithms, which 
do not require knowledge of the dynamical system when deployed, have started to emerge \cite{kurth2023fourcastnet,bi2023accurate,price2025probabilistic,Allen2025,alexe2024graphdop,kossaifi2026demystifying}. These methods have also been developed primarily in weather
forecasting, but present opportunities for deployment in many other domains. Developing
mathematical theory pertinent to these methods can play an important role
in the process of making the methods more widely applicable and is the focus of this paper.

Over the last half century, model-driven filtering, smoothing and forecasting methods have dominated 
in most application domains, starting from the seminal work of Kalman and Bucy for linear Gaussian
systems \cite{kalman1960new,kalman1961new}; the field evolved with introduction of the bootstrap, extended (ExKF) and ensemble Kalman filters (EnKF) \cite{doucet2000sequential,jazwinski2007stochastic,evensen1994sequential} which, respectively,
typically work well for small-, medium- and large-scale dynamical systems: bootstrap particle filter weights collapse in high dimensions and so they are best in low dimensions; EnKF has equal weights and has performed well for weather forecasting with state spaces in the billions of variables; ExKF falls between these cases as it makes a Gaussian approximation (mitigating weight collapse), but the covariances cannot be propagated in high dimension as they are too large. All model-driven approaches require knowledge of the dynamics, requiring
computationally expensive evaluations in the context of many large-scale applications of interest. Purely data-driven, model-free, methods for forecasting were proposed by Lorenz in 1969 \cite{lorenz1969atmospheric}, and go by the name of analog forecasting;
these methods were not widely adopted, however, primarily because of the lack 
of data to support them, and because they are discontinuous as a function of input data.
In the last decade kernel analog forecasting
has been developed \cite{alexander2020operator}, a smoothening of the original
approach of Lorenz, supported by theory. Furthermore, methods such as dynamic
mode decomposition \cite{schmid2010dmd}, which are also purely data-driven, 
have been widely adopted and come with a developing theory \cite{mezic2020spectrum}. These methods have not yet, however, found success in the context of large-scale data assimilation and forecasting problems; their success
is mainly in the identification of large-scale coherent features in high-dimensional systems.

Machine learning has recently emerged as a novel computational tool for 
improving data assimilation and forecasting in both the model- and data-driven settings.
Learning has been introduced within model-driven DA in a variety of ways,
primarily in the filtering context. For example, it has been used for model error correction \cite{Farchi2021,levine2022}, for constructing cheap surrogates of the dynamics \cite{sanzalonso2025longtime, adrian2025data} and more recently to approximate conditioning in the assimilation step \cite{Bach2025Learning}. A data-driven approach to smoothing has recently been employed in \cite{Allen2025, gupta2026healda} for estimating the initial conditions of weather dynamics. Machine learning has also introduced a new approach to forecasting which is fully data-driven; this involves the learning of mappings which output predictions for the forecasted components of the system from their past observations. This approach has garnered wide interest in domains like weather forecasting, where both model evaluations and assimilation steps are particularly costly \cite{kurth2023fourcastnet,bi2023accurate, price2025probabilistic, alexe2024graphdop, kossaifi2026demystifying}. Time-series forecasting for
commercial and economic purposes has also seen considerable impact from machine-learning
\cite{wen2017multi,eisenach2020mqtransformer,challu2023nhits,liu2023itransformer,ansari2024chronos}. In a similar context, deploying pretrained time-series foundation models for forecasting of partially observed chaotic dynamical systems has been numerically investigated in \cite{zhang2025zeroshot}. Underpinning these data-driven approaches
with theoretical foundations is the goal of this paper.
In \Cref{tab:da_comparison} we summarize the distinction between model-driven DA, that has dominated
for the last half century, and recently emerging data-driven approaches including, but not limited to, the specific methods we study in this paper.

\begin{table}[htbp]
    \centering
    \footnotesize
    \caption{Comparison between model-driven and data-driven approaches.}
    \label{tab:da_comparison}
    \begin{tabular}{lcc}
        \toprule
        & \textbf{Model-driven} & \textbf{Data-driven} \\
        \midrule
        \textbf{Algorithm} 
        & Uses dynamical physics model
        & Uses data from dynamical model \\ & & and/or direct observations \vspace{1ex}\\
        \textbf{Limitation} 
        & Possibly expensive to evaluate and
        & Requires existence of direct mapping, \\
        & may miss some dynamics due to & a sufficiently dense training dataset and may not  \\
        & unmodeled physics & enforce explicit physical constraints \vspace{1ex}\\
        \textbf{Advantage} 
        & Interpretable and output satisfies 
        & Possibly cheap to evaluate and \\
        & prescribed physical constraints & agnostic to model dynamics \\ 
        \bottomrule
    \end{tabular}
\end{table}

\iffalse
\begin{table}[htbp]
    \centering
    \footnotesize
    \caption{Comparison between model-driven and data-driven approaches.}
    \label{tab:da_comparison}
    \begin{tabular}{lcc}
        \toprule
        & \textbf{Model-driven} & \textbf{Data-driven} \\
        \midrule
        \textbf{Algorithm} 
        & Uses dynamical model
        & Does not use dynamical model \\
        \textbf{Limitation} 
        & Possibly expensive model evaluations
        & Requires existence of direct mapping \\
        \textbf{Advantage} 
        & Interpretable 
        & Agnostic to dynamics \\
        \bottomrule
    \end{tabular}
\end{table}
\fi 

\subsection{Contributions and Outline}
\label{subsec:contributions}

In this paper we develop theory for, and insights into,  
the data-driven smoothing and forecasting problems previously defined. We work in the
noise-free setting, and in continuous time,
focusing on three essential ingredients: (a) the interplay 
between the theoretical properties of the dynamics and the existence of well-defined maps
that can in principle be approximated; (b) the universal approximation 
properties of neural operators \cite{kovachki2023neural}, neural network architectures parametrizing mappings between spaces of functions, for these maps; and (c) the implementation and properties of approximations of
these maps learned from data in a model-free setting. We show that it is possible to construct the
desired operators, defined locally, under an observability-rank condition on the dynamics. We show that this condition is intimately connected to the setup of \cite{hermann1977nonlinear}, a foundational work in control theory which 
establishes that, under a more general observability-rank condition, 
distinct observations correspond to distinct dynamics. Takens' delay embedding theorem \cite{takens2006detecting,sauer1991embedology} is also conceptually pertinent to our work, but the concrete control-theoretic perspective of \cite{hermann1977nonlinear} provides a more natural basis for the theory and
algorithms that we develop here.
The existence of the desired operators allows approximation via neural operators, for which an emerging universal approximation theory has been developed \cite{lanthaler2023nonlocal}. In particular, we show how architectures such as the transformer neural operator \cite{calvello2024accuracy} may be applied in the smoothing and forecasting contexts, and 
yield desirable universal approximation results. 
We then implement these neural operators, demonstrating
data-driven approximation of the maps in practice.
Our contributions are as follows:
\vspace{1ex}
\begin{contributions}[label=(C\arabic*)]
\item 
\label{contrib:control}
We bridge the control theoretic literature with data-driven smoothing and forecasting by using an observability condition on the dynamics, which guarantees existence of a locally-defined continuous operator mapping an observed component of the dynamical system to an unobserved component. This condition is defined in \Cref{assump:observability}.
\vspace{1ex}
\item \label{contrib:approx_smoothing} We prove a theorem establishing the existence of a class of neural operators capable of approximating, to arbitrary accuracy, the operator mapping an observed component of the system to an unobserved component. This is the first universal approximation theorem for a smoothing problem in DA. The result is stated in \Cref{thm:UAsmoothing}.
\vspace{1ex}
\item \label{contrib:approx_forecasting} We prove a theorem establishing the existence of a class of neural operators capable of approximating, to arbitrary accuracy, the operator mapping a trajectory of the observed component of the system to the future of that trajectory. The result is stated in \Cref{thm:UAforecasting}.
\vspace{1ex}
\item \label{contrib:numerics}
We showcase the universal approximation theory developed by applying transformer neural operators on smoothing and forecasting problems in the setting of the Lorenz `63, Lorenz `96, and Kuramoto-Sivashinsky dynamical systems.
\end{contributions}
\vspace{1ex}
To the best of our knowledge, the Theorems \ref{thm:UAsmoothing} and \ref{thm:UAforecasting} are the first of their kind. After introducing notation that will be used throughout the paper in \Cref{subsec:notation}, we describe the general dynamical system setting in \Cref{subsec:setup}; here we also define the smoothing and forecasting problems which will be the object of investigation. In \Cref{sec:Observability} we introduce the observability framework
that underpins our analysis, leading to \Cref{contrib:control}. Building on regularity and observability assumptions on the dynamical systems, we develop the universal approximation theory for smoothing and forecasting in \Cref{sec:UAforDA}, thus addressing \Cref{contrib:approx_smoothing,contrib:approx_forecasting}. In \Cref{sec:experiments} we demonstrate this theory numerically, thus addressing \Cref{contrib:numerics}. Finally in \Cref{sec:conclusions} we discuss the results of the paper and conclude by outlining avenues for further work. 

\subsection{Notation}
\label{subsec:notation}

Throughout we denote the positive integers and non-negative integers
respectively by $\N=\{1,2,\cdots\}$ and $\Z^+=\{0,1,2,\cdots\},$
and the notation $\R=(-\infty,\infty)$ and $\R^+=[0,\infty)$ for the
reals and the non-negative reals. For a set $D \subset \R^m$, we denote 
by $\Bar{D}$ the closure. 
Given two vector spaces $\cU,\cV$ we denote by $\sL(\cU;\cV)$ the space of linear operators acting between $\cU$ and $\cV$. We denote by $C(D;\R^d)$ the infinite dimensional Banach space of continuous functions mapping the set $D$ to the $d$-dimensional vector space $\R^d$. The space is endowed with the supremum norm $\|\placeholder\|_\infty$. We will sometimes use the shorthand notation $C(D)$ to denote continuous functions defined on the domain $D$,
whenever the image space is unambiguous or not germaine. Similarly, we sometimes drop the domain notation completely, when it is clear from context. For $s \in \Z^+$ we denote by $C^s$ the space of continuously differentiable functions up to order $s$. Let $C^{\infty}$ denote the infinite dimensional real vector space of all infinitely differentiable functions. For any function $f\in C^s\bigl(\R^{d_1};\R^{d_2}\bigr)$ and any $0\leq k\leq s$, we define by $\sD^k f(v)$ the $k$th derivative at $v\in \R^{d_1}$ viewed as the linear operator $\sL\bigl(\bigotimes_{i=1}^k\R^{d_1}; \R^{d_2}\bigr)$. We endow the space $C^s$ with the norm $\|\placeholder\|_{C^s}$ defined as
\begin{equation}
\label{eq:norm}
    \|f\|_{C^s} = \sum_{j=0}^s \| \sD^j f\|_\infty,
\end{equation}
for any function $f\in C^s$. Given $s \in \N$, for any function $f\in C^s\bigl(\R^{d_1};\R^{d_2}\bigr)$ and a vector field $w\in C^{s-1}\bigl(\R^{d_1};\R^{d_1}\bigr)$, we also define $\mathcal{L}_w\,f(v)$, the Lie derivative of $f$ along $w$, as
\[
\mathcal{L}_w\,f(v) = \sD f (v) w(v), \qquad v\in \R^{d_1}.
\]
Note that $\mathcal{L}_w\,f \in C^{s-1}\bigl(\R^{d_1};\R^{d_2}\bigr).$
For $k\leq s$ we also denote by $\cL^k_w \,f$ the Lie derivative of $f$ along $w$ of order $k$, defined by $k-$fold application of $\cL_w$. We will occasionally drop the vector field from the notation when it is clear 
from the context.

\subsection{Dynamical System Setup}
\label{subsec:setup}

Consider the dynamical system \eqref{eq:dynamical_system_intro}.
We make the following regularity assumption:

\begin{assumption}[Regularity]
\label{assump:regularity}
 Assume that, for some $k \in \N$,
 \begin{itemize}
     \item $f \in C^k (\R^{d_p + d_q}; \R^{d_p})$;
     \item $g \in C^k (\R^{d_p + d_q}; \R^{d_q})$.
 \end{itemize}
Assume, furthermore, that solutions to \eqref{eq:dynamical_system_intro} exist for all $t \in \R^+.$
\end{assumption}

Under \Cref{assump:regularity} it follows that, for any $T>0$, $p \in C^k ([0,T];\R^{d_p})$ and $q \in C^k ([0,T];\R^{d_q})$. We define by $\Phi:\R^+ \times\R^{d_p+d_q}\to\R^{d_p+d_q}$ the semigroup of operators associated to the dynamical system, so that $\bigl(p(t),q(t)\bigr)=\Phi(t,p_0,q_0).$  
Let $I \subset \R^{d_p + d_q}$ denote a compact set of initial conditions to \eqref{eq:dynamical_system_intro} and fix $T>0$. Since by \Cref{assump:regularity} solutions to \eqref{eq:dynamical_system_intro} exist for all $t \in \R^+$, the result of \cite[Theorem 2.10]{teschl2012ordinary} implies that $\Phi(\placeholder,\placeholder)\in C^k([0,T]\times I;\R^{d_p + d_q})$; therefore, it may be deduced that the map $x\mapsto \Phi(\placeholder,x)\in C^k([0,T];\R^{d_p + d_q})$ is continuous. Since the image of a compact set under a continuous map is compact, we deduce that the set of orbits 
$S^I_{[0,T]} = \{\Phi(t,x)\,\text{for }t\in [0,T]: x \in I \}$ is compact in
$C^k([0,T];\R^{d_p + d_q})$. Let $\pi_p \colon \R^{d_p + d_q} \to \R^{d_p}$ be the projection map onto the first $d_p$ coordinates and $\pi_q \colon \R^{d_p + d_q} \to \R^{d_q}$ be the projection map onto the remaining $d_q$ coordinates of the state space. We note that these are both continuous functions. Define the projected sets
\begin{equation}
\label{eq:Ssets}
S^I_{[0,T],p} = \{\pi_p (s): s \in S^I_{[0,T]}\}, \quad S^I_{[0,T],q} = \{\pi_q (s): s \in S^I_{[0,T]}\};
\end{equation}
these too are compact in $C^k([0,T];\R^{d_p})$ and $C^k([0,T];\R^{d_q})$ since $S^I_{[0,T]}$ is compact and projection is continuous. Fixing some $\tau>0$, for the set of orbits defined on the future time interval $S^I_{[T,T+\tau]} = \{\Phi(t,x)\,\text{for }t\in [T,T+\tau]: x \in I \}$ we similarly define the projected set
\begin{equation}
    \label{eq:forecastingSet}
    S^I_{[T,T+\tau],p} = \{\pi_p (s): s \in S^I_{[T,T+\tau]}\}.
\end{equation}
It is readily deduced that $S^I_{[T,T+\tau],p}$ is also compact in $C^k([T, T+\tau];\R^{d_p})$. 
Given the dynamical systems \eqref{eq:dynamical_system_intro} equipped with the regularity assumption in \Cref{assump:regularity}, we define the two problems central to this paper. We assume that
pair $(p,q)$ solves \eqref{eq:dynamical_system_intro} but that vector field $(f,g)$ is not known to us. We then consider the following problems.
\vspace{1ex}
\begin{probenum}[label=(P\arabic*)]

\item \label{prob:smoothing} \textit{Smoothing}: recover $q\in S^I_{[0,T],q}\subset C^k ([0,T];\R^{d_q})$ from observed $p\in S^I_{[0,T],p} \subset C^k ([0,T];\R^{d_p})$. % for $(p, q) \in S^I_{[0,T]}$.
\vspace{1ex}
\item \label{prob:forecasting} \textit{Forecasting}: predict
$p \in S^I_{[T,T+\tau],p}$ from $p \in S^I_{[0,T],p}.$

\end{probenum}

\section{Observability}
\label{sec:Observability}
%\label{sec:UAforDA}

In this section we develop the observability framework that underpins the existence and universal
approximation of smoothing and forecasting maps, the subject of \Cref{sec:UAforDA}. 
Consider dynamical system \eqref{eq:dynamical_system_intro}. Underlying the theory in \Cref{sec:UAforDA}
is the question of whether a map $\{p(t)\}_{t\in [0,T]} \mapsto \{q(t)\}_{t\in [0,T]}$ exists. If this
map exists then the maps defined by the smoothing and forecasting problems both exist. A sufficient condition for the map $\{p(t)\}_{t\in [0,T]} \mapsto \{q(t)\}_{t\in [0,T]}$ to exist 
is that initial condition $q_0$ is determined by $\{p(t)\}_{t\in [0,T]}.$ Then, equation \eqref{eq:dynsys_g} can be integrated as a non-autonomous equation for $q(\cdot)$, driven by the observed $p(\cdot)$, defining the smoothing map. Once this is well-defined, $(p(T),q(T))$ are known and integration of \eqref{eq:dynamical_system_intro} defines the forecasting map.

This section is hence focused on the question of 
recovering $q_0$ from $p$ and its derivatives at time $t=0,$ noting that these derivatives are determined by
$\{p(t)\}_{t\in [0,T]}$, assuming sufficient differentiability. In fact we formulate the question a bit
more generally, seeking to recover $q(t)$ at some given time $t$, given $p$ and its derivatives at that time.
In \Cref{ssec:noto} we introduce the notation required to discuss the observability rank condition
which encapsulates solvability for $q(t)$ at some time $t$.  The observability-rank condition itself is
introduced in \Cref{ssec:obs}. We conclude in \Cref{ex:Lorenz} with an example of
observability in the Lorenz `63 equation.

\subsection{Observability Setup}
\label{ssec:noto}

The results in \Cref{sec:UAforDA} are obtained via an underlying observability assumption on the system \eqref{eq:dynamical_system_intro}. To state such an observability condition we introduce the following notation relating to the dynamical system. Let $\cL^m$ denote the $m^{th}$ Lie derivative of $f$ along the vector field $(f,g)$ and let $\widetilde{F}^{(n)}\colon\R^{d_p+d_q}\to\R^{nd_p}$ be defined from the first $n$ Lie derivatives of $f$ along the vector field $(f,g)$ as follows:
    \begin{equation}
    \label{eq:Lie_tilde}
    \widetilde{F}^{(n)}(\mathfrak{p},\mathfrak{q}) = \Bigl( \cL^0\,f(\mathfrak{p},\mathfrak{q})^\top,\cL^1\,f(\mathfrak{p},\mathfrak{q})^\top,\ldots, \cL^{n-1}\,f(\mathfrak{p},\mathfrak{q})^\top\Bigr)^\top,
    \end{equation}
    for any $(\fp, \fq) \in \R^{d_p + d_q}$.
We also define $F^{(n)}\colon\R^{d_p+d_q}\to\R^{(n+1)d_p}$ by
    \begin{equation}
    \label{eq:Lie}
    F^{(n)}(\mathfrak{p},\mathfrak{q}) = \Bigl(\fp^{\top},\, \cL^0\,f(\mathfrak{p},\mathfrak{q})^\top,\cL^1\,f(\mathfrak{p},\mathfrak{q})^\top,\ldots, \cL^{n-1}\,f(\mathfrak{p},\mathfrak{q})^\top\Bigr)^\top,
    \end{equation}
    for any $(\fp, \fq) \in \R^{d_p + d_q}$. We note that the two functions $\widetilde{F}^{(n)}$ and
    $F^{(n)}$ differ only through the inclusion of $\fp\in \R^{d_p}$ in $F^{(n)}.$ Using either we can write down equations which, potentially, enable the extraction of $q(t)$ from knowledge of $p(\cdot)$ and its derivatives at $t.$

    To this end we show how a collection of Lie derivatives of the vector field $f$ can be expressed in terms of derivatives of $p.$ By considering \eqref{eq:dynsys_f}, we may deduce that for any $(p,q) \in S^I_{[0,T]},$ it holds that
    \begin{align*}
    \cL\,f \bigl(p(t),q(t) \bigr) &= \partial_p f\bigl(p(t),q(t) \bigr)\cdot f \bigl(p(t),q(t) \bigr) + \partial_q f\bigl(p(t),q(t) \bigr)\cdot g \bigl(p(t),q(t) \bigr)\\
    & = \partial_t^2\, p(t),
    \end{align*}
    for all $t\in[0,T]$.
    By repeated application of Lie derivatives, it holds for any $(p, q)\in S^I_{[0,T]} \subset C^k([0,T];\R^{d_p})\times C^k([0,T];\R^{d_q})$ and for all $n \leq k-1$ that
    \begin{equation}
    \label{eq:connection_control}
    \cL^{n}\,f \bigl(p(t),q(t) \bigr) = \partial_t^{n+1}\, p(t),
    \end{equation}
    for all $t\in [0,T]$. Define operator $\widetilde{P}^{(n)}\colon C^k([0,T];\R^{d_p})\to \bigotimes_{j=1}^{n} C^{k-j}([0,T];\R^{d_p})$ via its action on $p\in C^k([0,T];\R^{d_p})$ as follows:
    \begin{equation}
    \label{eq:P_op_tilde}
        \bigl(\widetilde{P}^{(n)}(p)\bigr)(t) = \Bigl(\partial_t^1\,p(t)^\top,\ldots, \partial_t^n\,p(t)^\top\Bigr)^\top.
    \end{equation}
We may then deduce that any $(p, q)\in S^I_{[0,T]}$ satisfies the equation
    \begin{equation}
    \label{eq:observability_eq_tilde}
        \widetilde{F}^{(n)}\bigl(p(t), q(t) \bigr) = \bigl(\widetilde{P}^{(n)}(p)\bigr)(t),
    \end{equation}
    for all $t\in [0, T]$.
    Similarly, define operator $P^{(n)}\colon C^k([0,T];\R^{d_p})\to \bigotimes_{j=0}^{n} C^{k-j}([0,T];\R^{d_p})$ 
    by
    \begin{equation}
    \label{eq:P_op}
        \bigl(P^{(n)}(p)\bigr)(t) = \Bigl(\partial_t^0\, p(t)^\top, \partial_t^1\,p(t)^\top,\ldots, \partial_t^n\,p(t)^\top\Bigr)^\top,
    \end{equation}
    to deduce that any $(p, q)\in S^I_{[0,T]}$ satisfies the equation
    \begin{equation}
    \label{eq:observability_eq}
        F^{(n)}\bigl(p(t), q(t) \bigr) = \bigl(P^{(n)}(p)\bigr)(t),
    \end{equation}
    for all $t\in [0, T]$. 
    Notice that in the topology of \eqref{eq:norm}, $P^{(n)}$ is continuous for all $n$ since 
    differentiation $(k-j)$-times is continuous in each 
    $C^{k-j}([0, T];\R^{d_p})$ component for $0 \leq j \leq n$. 
    Similar comments apply to $\widetilde{P}^{(n)}.$

\subsection{Observability-Rank Condition}
\label{ssec:obs}

    Either of the identities \eqref{eq:observability_eq_tilde} or \eqref{eq:observability_eq}  may be
    used to define an equation for determination of $q(t)$ from knowledge of $p(\cdot)$ and its derivatives at $t.$
    We first discuss this by using $\widetilde{P}^{(n)}$, which is perhaps the most direct way to consider the
    question of recovering $q(t);$ we then modify and frame in terms of $P^{(n)}$ enabling us to connect
    with the control theory literature.
    
    Fixing some $t\in[0,T]$, and choosing 
    $\widetilde{L}:\R^{nd_p}\to \R^{d_q}$, we consider the following equation for $\fq$:
    \begin{equation}
    \label{eq:observability_eq_inv3}
    \widetilde{L}\widetilde{F}^{(n)}\bigl(p(t), \mathfrak{q} \bigr) = \widetilde{L}\bigl(\widetilde{P}^{(n)}(p)(t)\bigr).
    \end{equation}
    Given $p(\cdot)$ and its derivatives at $t$, 
    this constitutes a system of $d_q$ equations for $\fq \in \R^{d_q}$. 
     If there is a time $t$ and linear operator $\widetilde{L}$ such that the operator $\widetilde{L}\widetilde{F}^{(n)}(p(t),\,\placeholder):\R^{d_q}\to\R^{d_q}$ is invertible, then there exists a unique solution $\fq$ 
     to \eqref{eq:observability_eq_inv3} and, by \eqref{eq:observability_eq_tilde}, $\fq=q(t).$ 
     This then facilitates determination of $q(t)$ from knowledge of $p(\cdot)$ and its derivatives at $t.$
     
    % This fact provides motivation for imposing a local invertibility condition to allow existence of a map from observed $p\in C^k([0,T];\R^{d_p})$ to unobserved $q\in C^k([0,T];\R^{d_q})$ in the neighborhood of invertibility. However, in order to connect with formulations of related problems in the control theory literature, we formulate
     %a slight modification  of the foregoing.
     
     A similar approach can be undertaken using $P^{(n)}$ rather than  $\widetilde{P}^{(n)}$.
     Again fixing some $t\in[0,T]$, and now choosing 
    ${L}:\R^{(n+1)d_p}\to \R^{d_p+d_q}$, we consider the equation
    \begin{equation}
    \label{eq:observability_eq_inv2}
    {L}{F}^{(n)}\bigl(\fp, \mathfrak{q} \bigr) = {L}\bigl({P}^{(n)}(p)(t)\bigr).
    \end{equation}
    This constitutes a system of $d_p+d_q$ equations for $(\fp,\fq) \in \R^{d_p+d_q}$.
     If there exists a time $t$ and linear transformation $L$ such that the operator ${L}{F}^{(n)}(\placeholder,\,\placeholder):\R^{d_p+d_q}\to\R^{d_p+d_q}$ is invertible, then there exists a unique solution $(\fp,\fq)$, to \eqref{eq:observability_eq_inv2} and, by \eqref{eq:observability_eq}, $(\fp,\fq)=\bigl(p(t),q(t)\bigr)$ and, in particular, we recover $q(t).$ 
     This motivates the following assumption.

\begin{assumption}[Observability-Rank Condition]
\label{assump:observability}
    We say that the system \eqref{eq:dynamical_system_intro} satisfies the observability-rank condition at $(\mathfrak{p}, \mathfrak{q})\in \R^{d_p + d_q}$, if there exists $n\in\N$ and a linear operator $L\colon\R^{(n+1)d_p}\to\R^{d_p+d_q}$ such that the rank of the matrix $\sD LF^{(n)}\bigl(\fp, \fq \bigr)$ is $d_p + d_q$.
\end{assumption}

    We recall that the inverse function theorem for Euclidean spaces states that for $G\colon \R^{d} \to \R^d$ a $C^1$ map, if $\sD G$ is invertible at a point $x \in \R^d$ then there exist open neighborhoods $U, V$ of $x$ and $G(x)$ respectively, such that there exists a continuous inverse $G^{-1}\colon V \to U$. The existence of open sets $U$ and $V$ along with continuous inverse is what we refer to as local invertibility. Thus:
    
    \begin{proposition}
Assume that the  observability-rank condition holds at $(\mathfrak{p}, \mathfrak{q})\in \R^{d_p + d_q}$. Then $LF^{(n)}(\placeholder, \placeholder)$ is locally invertible at $(\fp,\fq)$.
    \end{proposition}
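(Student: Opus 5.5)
The plan is to apply the Euclidean inverse function theorem, recalled just before the statement, to the map $G \coloneqq LF^{(n)}\colon \R^{d_p+d_q}\to\R^{d_p+d_q}$. Here $n\in\N$ and $L\colon\R^{(n+1)d_p}\to\R^{d_p+d_q}$ are those furnished by \Cref{assump:observability}. Two things must be checked: that $G$ is $C^1$ on a neighbourhood of $(\fp,\fq)$, and that $\sD G(\fp,\fq)$ is invertible.

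For regularity, I will go through the components of $F^{(n)}$ in \eqref{eq:Lie}. The first block is the identity-type map $(\fp,\fq)\mapsto\fp$, which is smooth. The remaining blocks are $\cL^m f$ for $0\le m\le n-1$. Under \Cref{assump:regularity} the vector field $(f,g)$ is $C^k$. By the remark in \Cref{subsec:notation}, each Lie derivative along a $C^{k}$ field lowers regularity by one, so $\cL^m f\in C^{k-m}$. Hence $F^{(n)}\in C^{k-n+1}$, and this is at least $C^1$ provided $n\le k$. This constraint is implicit in the assumption: $\sD LF^{(n)}$ must make sense, and the paper only uses $n\le k-1$ in \eqref{eq:connection_control}. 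I will state it explicitly, taking $n\le k$ so that the derivative in \Cref{assump:observability} is defined and continuous. Since $L$ is linear, $G=L\circ F^{(n)}$ is $C^1$, and the chain rule gives $\sD G = L\,\sD F^{(n)}$.

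For invertibility, $\sD G(\fp,\fq)$ is a square $(d_p+d_q)\times(d_p+d_q)$ matrix. By \Cref{assump:observability} its rank is $d_p+d_q$, so it is invertible. The inverse function theorem then gives open neighbourhoods $U\ni(\fp,\fq)$ and $V\ni G(\fp,\fq)$ such that $G|_U\colon U\to V$ is a bijection with continuous (indeed $C^1$) inverse $G^{-1}\colon V\to U$. This is exactly local invertibility in the sense defined before the statement.

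The only point that needs care, and so the main obstacle, is the regularity bookkeeping: checking that the order $n$ in the observability condition is compatible with the $C^k$ regularity, so that $F^{(n)}$ is genuinely $C^1$. Everything after that is a direct citation of the inverse function theorem.
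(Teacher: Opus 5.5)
Your proposal is correct and matches the paper's argument: apply the Euclidean inverse function theorem to $LF^{(n)}$, whose Jacobian is a full-rank square matrix and hence invertible. Your explicit check that $n\le k$ gives $F^{(n)}\in C^{k-n+1}\subseteq C^1$ is a worthwhile addition, since the paper leaves this implicit here and only imposes $n\le k$ in the later propositions.
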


\begin{remark}[Observability in Control Theory]
    We note that the observability-rank condition from \Cref{assump:observability} is an instance of the observability condition from the seminal paper \cite{hermann1977nonlinear}. In particular, considering the observation function $h(\mathfrak{p},\mathfrak{q}) = \mathfrak{p}$ for any $\mathfrak{p}\in\R^{d_p}, \mathfrak{q}\in \R^{d_q}$, we may reformulate \eqref{eq:observability_eq} in terms of this observation function $h$ and hence \Cref{assump:observability} in the control theoretic setting of \cite{hermann1977nonlinear}. We elaborate on this connection in \Cref{appendix:observability_control}.
\end{remark}

\subsection{Observability of Lorenz `63 Model}
\label{ex:Lorenz}

Consider the Lorenz `63 dynamical system in the form
\begin{subequations}\label{eq:L63_dynamic}
\begin{align} 
\dot{x} &= \sigma (y - x), \\ 
\dot{y} &= x (\rho - z) - y, \\ 
\dot{z} &= xy - \beta z. 
\end{align} 
\end{subequations}
initialized at $(x_0,y_0,z_0)\in \R^3$. We consider the setting where the component $p = x$ is observed and $q = (y, z)$ is unobserved. It is readily shown that the system satisfies the observability-rank condition, given this choice
of observation. We note that by \eqref{eq:observability_eq} it holds that
\begin{align}
    &F^{(2)}(x_0, y_0, z_0)
    = \begin{pmatrix}
        x_0\\
        \sigma(y_0 - x_0)\\
        \sigma\Bigl(x_0(\rho - z_0) - y_0 - \sigma(y_0 - x_0)\Bigr)\\
    \end{pmatrix}.
\end{align}
Now, for $L$ the identity matrix, the associated differential is
\begin{align*}
    \sD\, LF^{(2)}(x_0, y_0, z_0) = \begin{pmatrix}
        1 & 0 & 0\\
        -\sigma & \sigma & 0\\
        \sigma (\rho + \sigma - z_0) & -\sigma (\sigma + 1) & -\sigma x_0
    \end{pmatrix}.
\end{align*}
From the upper-triangular structure it follows readily that this is of full rank for $x_0 \ne 0$. Hence we can conclude that the Lorenz `63 dynamical system satisfies the observability-rank condition at $t=0$, whenever $x_0\neq 0$.
In particular, the map $\{x(t)\}_{t\in [0,T]} \mapsto \{\bigl(y(t),z(t)\bigr)\}_{t\in [0,T]}$ is
well-defined whenever $x_0 \ne 0.$

To contrast with observing $\{x(t)\}_{t\in [0,T]}$, we note that the Lorenz `63 model is \textit{not} observable if the only data given is $\{z(t)\}_{t\in [0,T]}$. Specifically, if $(x,y,z)$ satisfies \eqref{eq:L63_dynamic}, then
$(-x,-y,z)$ is also a solution to \eqref{eq:L63_dynamic}. Thus, there cannot exist a well-defined mapping 
$\{z(t)\}_{t\in [0,T]} \mapsto \{\bigl(x(t),y(t)\bigr)\}_{t\in [0,T]}.$
We demonstrate this experimentally in \Cref{sec:experiments}.

\section{Universal Approximation of Smoothing and Forecasting Maps}
\label{sec:UAforDA}

In this section we state and prove universal approximation results for neural operators approximating
the maps underlying the smoothing  \Cref{prob:smoothing} and forecasting  \Cref{prob:forecasting}.  
In \Cref{ssec:UAT} we recall a specific form of universal approximation theorem for neural operators that we employ for
both the smoothing and forecasting problems. \Cref{subsec:smoothing} is devoted to the existence and
universal approximation map $\{p(t)\}_{t\in [0,T]} \mapsto \{q(t)\}_{t\in [0,T]}$. The recovery of this map amounts to solving an instance of a smoothing \Cref{prob:smoothing}. In \Cref{subsec:forecasting} we apply the results from the subsection preceding it to deduce the existence and universal approximation of map  
$\{p(t)\}_{t\in [0,T]} \mapsto \{p(t)\}_{t\in [T,T+\tau]}$. The recovery of this map amounts to solving 
an instance of a forecasting \Cref{prob:forecasting}.

\subsection{Universal Approximation Theorem} \label{ssec:UAT}

The following proposition gives conditions under which a continuous operator may be approximated by a transformer neural operator; the result is proved in \cite[Theorem~22]{calvello2024continuum}. We use this 
specific universal approximation result in the two subsequent subsections to study the approximation of smoothing and forecasting maps. We note, however, that other neural operators could also be used \cite{kovachki2024operator}.

\begin{proposition}
\label{thm:UANeuralOperator}
Let $D \subset \R^d$ be a bounded domain with Lipschitz boundary, and fix integers $s,s'\geq 0$. If $
\Psi^\dagger\colon C^{s}(\overline{D};\,\R^{r}) \to C^{s'}(\overline{D};\,\R^{r'})$
is a continuous operator and $K \subset C^{s}(\overline{D};\,\R^{r})$ a compact set, then for any $\varepsilon>0$ there exists a transformer neural operator 
$\Psi(\placeholder; \theta)\colon K \subset C^{s}(\overline{D};\,\R^{r}) \to C^{s'}(\overline{D};\,\R^{r'})$
so that
\begin{equation}
\sup_{u\in K} \left\| \Psi^\dagger(u) - \Psi(u;\theta) \right\|_{C^{s'}} \le \varepsilon .
\end{equation}
\end{proposition}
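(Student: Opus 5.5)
The plan is to reduce the statement to a universal approximation result for a simpler nonlocal architecture, and then show that transformer neural operators can emulate that architecture. Concretely, I would use the averaging neural operator of \cite{lanthaler2023nonlocal}: lifting, pointwise nonlinearities and a single spatial average. The argument has three steps: (i) a finite-dimensional encoder--decoder approximation of $\Psi^\dagger$ on $K$; (ii) realisation of that encoder--decoder by an averaging-type architecture; (iii) emulation of each averaging-type layer by attention layers, which I expect to be the hard part.

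For step (i), I would first use the Lipschitz boundary of $D$ to apply a Stein extension operator, so that $C^s(\overline{D};\R^r)$ embeds continuously into $C^s$ on a box containing $\overline{D}$. On that box there are finite-rank linear maps $\Pi_N$ that converge to the identity uniformly on compact subsets of $C^s$, for instance mollification followed by tensor-product polynomial interpolation. Compactness of $K$ and uniform continuity of $\Psi^\dagger$ on a compact neighbourhood of $K$ then give $\sup_{u\in K}\|\Psi^\dagger(u)-\Psi^\dagger(\Pi_N u)\|_{C^{s'}}\le\varepsilon/4$. Next I would write $\Pi_N u=\sum_{j\le N}\ell_j(u)\phi_j$, where the coefficients can be chosen as integrals $\ell_j(u)=\int_D u\cdot\psi_j\,\rd x$ against smooth weights $\psi_j$; this is possible after a small further perturbation, by density of such functionals on finite-dimensional pieces. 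On the compact image $\Psi^\dagger(K)\subset C^{s'}$, I would similarly project onto finitely many smooth functions $\eta_1,\dots,\eta_M$ with continuous coefficient maps. This yields a continuous map $G\colon\R^N\to\R^M$ with
\[
\sup_{u\in K}\Bigl\|\Psi^\dagger(u)-\sum_{m\le M}G_m\bigl(\ell(u)\bigr)\eta_m\Bigr\|_{C^{s'}}\le\varepsilon/2 .
\]

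For step (ii), I would approximate $G$ uniformly on the compact set $\ell(K)$ by a finite-dimensional neural network, using classical universal approximation. The decoder $x\mapsto\sum_m G_m\eta_m(x)$ is then a pointwise operation that combines the constant channel $\ell(u)$ with the coordinate $x$. Since the $\eta_m$ are smooth on the compact set $\overline{D}$, each $\eta_m$ can be approximated in $C^{s'}$ by a network in $x$ with smooth activations, using simultaneous approximation of derivatives. The encoder $u\mapsto\ell(u)$ is a lifting $x\mapsto(u(x)\cdot\psi_j(x))_j$, again realised by pointwise networks of $(u(x),x)$, followed by one spatial average. Together these give an averaging neural operator that approximates $\Psi^\dagger$ to within $\varepsilon$ on $K$.

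Step (iii) is the main obstacle: showing that a transformer neural operator contains these components with controlled error in $C^{s'}$. The pointwise networks are supplied by the feed-forward blocks. For the average, I would take an attention layer whose query and key matrices vanish; the continuum softmax kernel is then constant, equal to $|D|^{-1}$, so the layer outputs $|D|^{-1}\int_D V u(y)\,\rd y$, which is exactly the spatial average. The residual connection is used to carry $x$ and the average forward together. Two delicate points remain. First, normalisation layers or positional encodings, if present in the architecture of \cite{calvello2024continuum}, must be neutralised or absorbed. Second, errors must be propagated in the $C^{s'}$ norm, which requires smooth activations and uniform bounds on derivatives on compact sets. I would handle both through composition estimates, using continuity of each layer from $C^{s}$ to $C^{s'}$ on bounded sets together with a triangle inequality over the finitely many layers.
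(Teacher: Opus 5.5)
Your proposal is correct, and its top-level structure is the one behind the paper's statement. The paper does not prove this proposition itself; it cites \cite[Theorem~22]{calvello2024continuum}. The underlying mechanism is spelled out in the proof of \Cref{thm:ua1} for the cross-attention variant, and it is exactly your step (iii). One sets the query and key matrices to zero, so that attention is the spatial average $|D|^{-1}\int_D Vu\,\rd y$. One then chooses the remaining residual and feed-forward weights so that the network collapses to an averaging layer $u\mapsto\sigma\bigl(Wu+b+|D|^{-1}\int_D u\,\rd y\bigr)$ between pointwise lifting and projection networks. Finally one invokes \cite[Theorem~2.1]{lanthaler2023nonlocal}, which is the same statement for averaging neural operators.

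What you do differently is reprove that theorem in steps (i)--(ii) by an encoder--decoder argument. This makes the proof self-contained, but it also shows you have placed the difficulty in the wrong step. The reduction is an exact inclusion of architectures, for the variant without layer normalisation, which is the one for which the result is stated. So step (iii) needs no error propagation or composition estimates. The delicate points are in step (i), and two of them should be tightened.

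First, $K$ has no compact neighbourhood in an infinite-dimensional space. What you need is the standard fact that a continuous map is uniformly continuous at a compact set: there is $\delta>0$ with $\|\Psi^\dagger(u)-\Psi^\dagger(v)\|_{C^{s'}}<\varepsilon$ whenever $u\in K$ and $\|u-v\|_{C^s}<\delta$.

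Second, the coefficient functionals of $\Pi_N$ depend on the extension of $u$ outside $D$, which an average over $D$ cannot see. Replacing them by $u\mapsto\int_D u\cdot\psi_j\,\rd x$ requires approximation uniformly on $K$, not merely on finite-dimensional pieces. This does hold. Such functionals with $\psi_j\in C_c^\infty(D)$ separate the points of $C^s(\overline{D})$, so they are weak-$*$ dense in the dual. Uniform convergence on compact sets is a topology compatible with the duality, so they are dense in that topology as well. Alternatively, replace the extension by a mollification shifted inward along the cone directions of the Lipschitz domain, so that all coefficients are integrals over $D$ from the outset.
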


\subsection{Smoothing}
\label{subsec:smoothing}
In this subsection we show that if a dynamical system of the form \eqref{eq:dynamical_system_intro} satisfies the regularity conditions in \Cref{assump:regularity}, together with the observability-rank condition \Cref{assump:observability}, then it is possible to construct a continuous operator mapping the observed trajectory to the initial condition of the full trajectory; we then show how this leads to existence of a continuous operator mapping the observed trajectory $p$ over a time interval to the unobserved trajectory $q$ over the same interval. We note that the continuity of these operators is necessary to apply existing universal approximation results for neural operators.

\begin{proposition}[Existence of map $W:p\mapsto (p(0), q(0))$]
\label{prop:W}
Consider the dynamical system \eqref{eq:dynamical_system_intro} satisfying the regularity \Cref{assump:regularity}, for integer $k$, and the observability \Cref{assump:observability} at some $(\mathfrak{p},\mathfrak{q})\in\R^{d_p + d_q}$, for integer $n \le k.$ Let $U \ni (\fp, \fq)$ and $V \ni LF^{(n)}(\fp, \fq)$ be the open sets given by the inverse function theorem. Then, for every compact $I \subset U$, there exists a continuous map $W\colon S^I_{[0,T],p} \subset C^{k}([0,T];\R^{d_p}) \to \R^{d_p + d_q}$ such that, for every $(p, q) \in S^I_{[0,T]}$,
\begin{equation}
\label{eq:W}
W(p) = (p(0), q(0)).
\end{equation}
\end{proposition}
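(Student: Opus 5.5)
The plan is to build $W$ as a composition of three continuous maps: evaluation of the observed trajectory and its derivatives at $t=0$, the linear map $L$, and the local inverse $G^{-1}$ of $G := LF^{(n)}$ provided by the inverse function theorem. Concretely, define
\[
W(p) = G^{-1}\Bigl(L\bigl(P^{(n)}(p)\bigr)(0)\Bigr), \qquad p \in S^I_{[0,T],p}.
\]

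The first step is to check that $G$ is $C^1$, so that the proposition on local invertibility applies. Since $f,g\in C^k$, the iterated Lie derivative $\cL^{j}f$ lies in $C^{k-j}$. Hence the entries of $F^{(n)}$ are at least $C^{k-n+1}$, which is $C^1$ provided $n\le k$. The map $G$ is then $C^1$, and the observability-rank condition makes $\sD G(\fp,\fq)$ invertible. This yields open sets $U\ni(\fp,\fq)$, $V\ni G(\fp,\fq)$ and a continuous inverse $G^{-1}\colon V\to U$; after shrinking if necessary, $G|_U\colon U\to V$ is a bijection.

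There is an index subtlety to settle here. Identity \eqref{eq:connection_control} was derived for $n\le k-1$, and $P^{(n)}$ requires $\partial_t^n p$ with $p\in C^k$, so the range $n\le k$ in the hypothesis is exactly what is needed. The top component $\cL^{n-1}f=\partial_t^n p$ needs $n-1\le k-1$, which holds. I would state this check explicitly.

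The second step is well-definedness and the identity \eqref{eq:W}. Take $p\in S^I_{[0,T],p}$, so $p=\pi_p\Phi(\placeholder,x)$ for some $x=(p_0,q_0)\in I\subset U$. Evaluating \eqref{eq:observability_eq} at $t=0$ gives
\[
L\bigl(P^{(n)}(p)\bigr)(0) = G(p_0,q_0)\in G(U)=V,
\]
so the argument of $G^{-1}$ lies in $V$. Injectivity of $G$ on $U$ then gives $W(p)=(p_0,q_0)=(p(0),q(0))$.

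A further point is that $p$ could in principle arise from two different initial conditions in $I$. This causes no ambiguity: the formula for $W$ depends only on $p$, and any $x\in I$ generating $p$ must satisfy $G(x)=L(P^{(n)}(p))(0)$. Injectivity of $G$ on $U$ therefore forces all such $x$ to coincide. So $W$ is well defined, and \eqref{eq:W} holds for every $(p,q)\in S^I_{[0,T]}$.

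The third step is continuity. As noted after \eqref{eq:observability_eq}, $P^{(n)}$ is continuous from $C^k$ into the product of $C^{k-j}$ spaces. Point evaluation at $t=0$ is continuous in sup norm, and $L$ is linear on a finite-dimensional space, hence continuous. Finally $G^{-1}$ is continuous on $V$, and the composition lands in $V$ on all of $S^I_{[0,T],p}$. So $W$ is continuous with respect to the $C^k$ topology on the restricted domain.

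I expect the main obstacle to be the careful bookkeeping rather than any deep step. It consists of confirming the regularity needed for the inverse function theorem given $n\le k$, and ensuring that $U$ is taken small enough for $G|_U$ to be a bijection onto $V$, so that the values $G(I)$ fall in $V$ and are inverted correctly. Compactness of $I$ is not needed for existence itself; it is used later for the universal approximation step.
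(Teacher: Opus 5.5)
Your proposal is correct and is essentially the paper's proof: $W=(LF^{(n)})^{-1}\circ L\circ\delta_0\circ P^{(n)}$. The identity $LF^{(n)}(p(0),q(0))=L\bigl(P^{(n)}(p)(0)\bigr)$ places the argument in $V$, and the bijectivity of $LF^{(n)}\colon U\to V$ then gives both continuity (by composition) and $W(p)=(p(0),q(0))$. Your extra checks, namely that $n\le k$ makes $F^{(n)}$ at least $C^1$ and that injectivity on $U$ makes $W$ well defined even if $p$ came from several initial conditions in $I$, spell out points the paper leaves implicit.
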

\begin{proof}
Observe that from \eqref{eq:observability_eq_inv2} for every $(p, q) \in S^I_{[0,T]}$ we have that 
\begin{equation}
\label{eq:W-capability}
    LF^{(n)}(p(0), q(0)) = L(P^{(n)}(p)(0))
\end{equation}
and $LF^{(n)}(\placeholder, \placeholder)$ is locally invertible by the inverse function theorem from \Cref{assump:observability}. Furthermore, the evaluation functional at time $t=0$, namely $\delta_0 \colon \bigotimes_{j=0}^n C^{k-j}([0,T];\R^{d_p}) \to \R^{(n+1)d_p}$, is linear and bounded, hence continuous with respect to the topology of \eqref{eq:norm}. Using these operators, we can construct $W\colon S^I_{[0,T],p} \subset C^{k}([0,T];\R^{d_p}) \to \R^{d_p + d_q}$ so that
\begin{equation}
      W(p) = \bigl(LF^{(n)}\bigr)^{-1}
      \circ L \circ \delta_0 \circ P^{(n)}(p).
\end{equation}
From \eqref{eq:W-capability} we have that $ L \circ \delta_0 \circ P^{(n)}(p) \in V$ for all $p$, so composition with $(LF^{(n)})^{-1}$ is well-defined. By construction $(LF^{(n)})^{-1}$ is continuous and $W$ is continuous since it is a composition of continuous operators. For all $(p, q) \in S^I_{[0,T]}$ we have $W(p) = (p(0), q(0))$ since $(p(0), q(0))$ is a solution to \eqref{eq:W-capability} and $LF^{(n)}$ is a bijection from $U$ to $V$. 
\end{proof}

Following \Cref{prop:W} we may now establish the existence of an operator mapping observed $p$ trajectories to unobserved $q$ trajectories.
\begin{proposition}[Existence of map $\Psi_S^\dagger:p\mapsto q$]
\label{prop:psiS}
Consider the dynamical system \eqref{eq:dynamical_system_intro} satisfying
regularity \Cref{assump:regularity}, for integer $k$, and the observability \Cref{assump:observability} at some $(\mathfrak{p},\mathfrak{q})\in\R^{d_p + d_q}$, for integer $n \le k.$
Let $U \ni (\fp, \fq)$ and $V \ni LF^{(n)}(\fp, \fq)$ be the open sets given by the inverse function theorem. Then, for every compact $I \subset U$, there exists continuous operator $\Psi_S^\dagger \colon S^I_{[0,T],p} \subset C^k([0,T];\R^{d_p}) \to S^I_{[0,T],q} \subset C^k([0, T]; \R^{d_q})$ such that, for every $(p, q) \in S^I_{[0,T]}$
and $t \in [0,T]$,
\begin{equation}
    \label{eq:p_to_q_map1}
    \Psi_S^\dagger(p)(t) = q(t).
\end{equation}
\end{proposition}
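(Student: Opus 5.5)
The plan is to write $\Psi_S^\dagger$ as a composition of two maps, each continuous in the relevant topology. The first is the map $W$ of \Cref{prop:W}, which recovers the initial condition from the observed trajectory. The second is the flow map $x\mapsto \pi_q\Phi(\placeholder,x)$, which propagates that initial condition over $[0,T]$ and reads off the unobserved component. Concretely, I define
\begin{equation*}
\Psi_S^\dagger(p) = \pi_q\circ \Phi\bigl(\placeholder, W(p)\bigr), \qquad p\in S^I_{[0,T],p}.
\end{equation*}
Take $(p,q)\in S^I_{[0,T]}$, so that $(p,q)=\Phi(\placeholder,x)$ for some $x\in I$. By \Cref{prop:W}, $W(p)=(p(0),q(0))=x$. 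Hence $\Phi(t,W(p))=(p(t),q(t))$ for all $t\in[0,T]$, which gives $\Psi_S^\dagger(p)(t)=q(t)$. This establishes \eqref{eq:p_to_q_map1} and also shows that the image lies in $S^I_{[0,T],q}$.

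A priori, $p$ might arise from two different $x,x'\in I$. However, $W(p)$ is a single point, and it equals both $x$ and $x'$. So $x=x'$, and $\Psi_S^\dagger$ is well defined as a function of $p$ alone. The inverse function theorem has already done the work here, since $LF^{(n)}$ is injective on $U\supset I$.

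For continuity I will argue in three steps.
\begin{enumerate}
\item $W\colon S^I_{[0,T],p}\to\R^{d_p+d_q}$ is continuous by \Cref{prop:W}, and its image lies in $I$.
\item The map $x\mapsto\Phi(\placeholder,x)\in C^k([0,T];\R^{d_p+d_q})$ is continuous on $I$. This follows from \cite[Theorem 2.10]{teschl2012ordinary}, as noted in \Cref{subsec:setup}.
\item The projection $\pi_q$, applied pointwise in time, is a bounded linear map from $C^k([0,T];\R^{d_p+d_q})$ to $C^k([0,T];\R^{d_q})$, and is therefore continuous.
\end{enumerate}
A composition of continuous maps is continuous, so $\Psi_S^\dagger$ is continuous.

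The step that needs the most care is the second one: continuity of $x\mapsto\Phi(\placeholder,x)$ into $C^k$ with the norm \eqref{eq:norm}, rather than merely into $C^0$. The plan is to use that $\Phi\in C^k([0,T]\times I)$, so all time derivatives up to order $k$ are jointly continuous on the compact set $[0,T]\times I$, and hence uniformly continuous there. Uniform continuity in $x$, uniformly in $t$, then gives continuity of $x\mapsto \partial_t^j\Phi(\placeholder,x)$ in the sup norm for every $j\le k$. In the setup this is applied only on $I$, and that is all we need, because $W$ takes values in $I$.

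Alternatively, one could integrate \eqref{eq:dynsys_g} as a non-autonomous equation driven by $p$, but composing with the full flow avoids any extra continuous-dependence argument.
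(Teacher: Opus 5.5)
Your proposal is correct and follows the paper's proof: you define $\Psi_S^\dagger(p)=\pi_q\circ\Phi(\placeholder,W(p))$, use $W(p)=(p(0),q(0))$ to obtain \eqref{eq:p_to_q_map1}, and get continuity as a composition of continuous maps. Your additional details are well-definedness in $p$, the image of $W$ lying in $I$, and continuity of $x\mapsto\Phi(\placeholder,x)$ into $C^k$ via uniform continuity of $\partial_t^j\Phi$ on $[0,T]\times I$; these make explicit what the paper only asserts, and are slightly more careful than the paper about the $C^k$ codomain (its proof writes $C([0,T];\R^{d_q})$ in the definition).
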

\begin{proof}
We construct such a $\Psi^\dagger$ using $W$ from \Cref{prop:W} and composition with the solution operator $\Phi$. Define $\Psi_S^\dagger : S^I_{[0,T],p} \subset C^k([0,T];\R^{d_p}) \to S^I_{[0,T],q} \subset C([0, T]; \R^{d_q})$ by
\begin{equation}
    \label{eq:p_to_q_map2}
    \Psi_S^\dagger(p)(t) = \pi_q \circ \Phi(t, W(p)), \quad 0\leq t\leq T.
\end{equation}
Recall that $\pi_q$ is continuous. By construction $W(p) = (p(0), q(0))$, so $\pi_q \circ \Phi(t, W(p)) = q(t)$. Observe that this is a continuous operator since $\Phi$ is continuous with respect to initial conditions, $W$ is continuous with respect to $p$ and composition of continuous operators is continuous.
\end{proof}
Leveraging the approximation properties of neural operators, we may now establish a universal approximation theorem for the solution of \Cref{prob:smoothing}, an instance of a smoothing problem. In the following, set $I$ is defined through the two preceding propositions.

\begin{theorem}[Universal Approximation for Smoothing]
\label{thm:UAsmoothing}
Consider the dynamical system \eqref{eq:dynamical_system_intro} satisfying 
regularity \Cref{assump:regularity}, for integer $k$, and the observability \Cref{assump:observability} at some $(\mathfrak{p},\mathfrak{q})\in\R^{d_p + d_q}$, for integer $n \le k.$
For any $\epsilon > 0$ there exists a transformer neural operator $\Psi(\placeholder;\theta): S^I_{[0,T],p} \subset C^k ([0,T];\R^{d_p}) \to  S^I_{[0,T],q} \subset C^k ([0,T];\R^{d_q})$ satisfying
\[\sup_{p \in S^I_{[0,T],p}} \|\Psi_S^\dagger(p) - \Psi(p;\theta)\|_{C^k} \leq \epsilon. \]
\end{theorem}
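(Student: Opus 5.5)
The plan is to combine \Cref{prop:psiS} with the universal approximation result \Cref{thm:UANeuralOperator}, taking $D=(0,T)\subset\R$, $s=s'=k$, $r=d_p$ and $r'=d_q$. The interval is a bounded domain with Lipschitz boundary and $\overline{D}=[0,T]$, so the function spaces match those of \Cref{thm:UANeuralOperator}. The set $I$ is a compact subset of the neighbourhood $U$ provided by the inverse function theorem, as in \Cref{prop:W,prop:psiS}.

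First I will check that the target operator meets the hypotheses of \Cref{thm:UANeuralOperator}. The domain $K=S^I_{[0,T],p}$ is compact in $C^k([0,T];\R^{d_p})$, as shown in \Cref{subsec:setup}. \Cref{prop:psiS} gives $\Psi_S^\dagger$ as an operator $K\to C^k([0,T];\R^{d_q})$, and I need it to be continuous into $C^k$, not just into $C$. The proof of \Cref{prop:psiS} wrote the codomain loosely as $C$, so I will argue this directly. Since $\Phi\in C^k([0,T]\times I)$, the map $x\mapsto\Phi(\placeholder,x)\in C^k([0,T];\R^{d_p+d_q})$ is continuous on $I$, as recorded in \Cref{subsec:setup}. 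The projection $\pi_q$ acts pointwise and linearly, so it is continuous from $C^k([0,T];\R^{d_p+d_q})$ to $C^k([0,T];\R^{d_q})$. \Cref{prop:W} gives $W$ continuous on $K$ with values in $I$, because $W(p)=(p(0),q(0))$ and $(p(0),q(0))\in I$. Therefore $\Psi_S^\dagger=\pi_q\circ\Phi(\placeholder,W(\placeholder))$ is a composition of continuous maps into $C^k$.

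The only remaining mismatch is that \Cref{thm:UANeuralOperator} is stated for operators defined on all of $C^{s}(\overline D;\R^r)$, whereas $\Psi_S^\dagger$ is defined only on the compact set $K$. I will handle this by extending $\Psi_S^\dagger$ continuously to the whole space. The Dugundji extension theorem applies because $K$ is a closed subset of a metric space and the target is a normed (hence locally convex) space. Alternatively, I can note that the proof in \cite{calvello2024continuum} only uses continuity on $K$.

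Applying \Cref{thm:UANeuralOperator} to the extension $\widetilde\Psi$ on $K$ then yields $\theta$ with $\sup_{p\in K}\|\widetilde\Psi(p)-\Psi(p;\theta)\|_{C^k}\le\epsilon$. Since $\widetilde\Psi=\Psi_S^\dagger$ on $K$, this is exactly the claimed bound. I expect the upgrade to $C^k$-continuity of $\Psi_S^\dagger$ to be the main point needing care; it rests on the joint $C^k$ regularity of $\Phi$ on the compact set $[0,T]\times I$, which yields uniform continuity of the derivatives in $x$.

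A minor caveat: the statement writes the codomain of $\Psi(\placeholder;\theta)$ as $S^I_{[0,T],q}$. The approximating operator need not map into this set exactly, so I will read that codomain as $C^k([0,T];\R^{d_q})$.
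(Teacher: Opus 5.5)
Your proposal is correct and takes the same route as the paper: apply \Cref{thm:UANeuralOperator} to the continuous operator $\Psi_S^\dagger$ from \Cref{prop:psiS} on the compact set $K=S^I_{[0,T],p}$, with $r=d_p$, $r'=d_q$ and $s=s'=k$. The extra points you handle are all valid and are left implicit in the paper: continuity of $\Psi_S^\dagger$ into $C^k$ rather than $C$, a Dugundji extension off $K$ to match the hypothesis of \Cref{thm:UANeuralOperator}, the open domain $D=(0,T)$, and reading the codomain of $\Psi(\placeholder;\theta)$ as $C^k([0,T];\R^{d_q})$.
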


\begin{proof}
We know that $\Psi_S^{\dagger}$ exists and continuous since the assumptions of \Cref{prop:psiS} are satisfied. We recall that $S^I_{[0,T],p}$ is compact. The conclusion follows from a direct application of \Cref{thm:UANeuralOperator} with $D = [0, T]$, $r = d_p$, $r' = d_q$, $\Psi^{\dagger} = \Psi_S^{\dagger}$, and $K = S_{[0,T],p}$.
\end{proof}

\begin{remark}
\label{rem:global}
    Since our observability-rank condition \Cref{assump:observability} is local, all operators constructed in the previous theorems have a domain of input trajectories which are near the point of invertibility $(\fp,\fq)$. The set of these trajectories is determined, non-constructively, by the inverse function theorem. It is of general interest to determine the largest possible domain of input trajectories for which the smoothing map $\Psi^\dagger_S$ (or the subsequently defined forecasting map $\Psi^\dagger_F$) exists and is continuous. A potential approach to this problem is to determine all points of invertibility and stitch together a global inverse from the open sets given by the inverse function theorem. Such analysis, however, will likely need to be carried out on a case-by-case basis and is not the focus of the current work.

    In the setting of the Lorenz `63 system studied in \Cref{ex:Lorenz}, we show that $U = \mathbb{R}\setminus \{0\} \cup \mathbb{R}^2$. Therefore the domain $S^I_{[0,T],p}$ of $\Psi^\dagger_S$ contains all trajectories with initial condition $x_0$ uniformly bounded away from $0$.
\end{remark}

\subsection{Forecasting}
\label{subsec:forecasting}
Using \Cref{prop:W} we may also establish the existence of an operator mapping the observed portion of $p$ trajectories, $p \!\restriction_{[0,T]}$, to the future portion $p \!\restriction_{[T,T+\tau]}$.
\begin{proposition}[Existence of map $\Psi_F^\dagger:p \!\restriction_{[0,T]}\mapsto p \!\restriction_{[T,T+\tau]}$]
\label{prop:psiF}
Consider the dynamical system \eqref{eq:dynamical_system_intro} satisfying 
regularity \Cref{assump:regularity}, for integer $k$, and the observability \Cref{assump:observability} at some $(\mathfrak{p},\mathfrak{q})\in\R^{d_p + d_q}$, for integer $n \le k.$
Let $ U \ni (\fp, \fq)$ and $V \ni LF^{(n)}(\fp, \fq)$ be the open sets given by the inverse function theorem. 
Then, for every compact $I \subset U$, there exists a continuous operator $\Psi_F^\dagger \colon S^I_{[0,T],p} \subset C^k([0,T];\R^{d_p}) \to S^I_{[T,T+\tau],p} \subset C^k([T, T+\tau]; \R^{d_p})$ such that, for every $(p, q) \in S^I_{[0,T]}$,
\begin{equation}
    \label{eq:p_to_q_map3}
    \Psi_F^\dagger(p)(t) = p(t), \qquad T\leq t\leq T+\tau.
\end{equation}
\end{proposition}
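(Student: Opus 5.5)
The plan mirrors the proof of \Cref{prop:psiS}: recover the initial condition with $W$ from \Cref{prop:W}, then push it forward with the semigroup $\Phi$. Concretely, I would define
\begin{equation*}
\Psi_F^\dagger(p)(t) = \pi_p \circ \Phi\bigl(t, W(p)\bigr), \qquad T \le t \le T+\tau,
\end{equation*}
for $p \in S^I_{[0,T],p}$.

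\textbf{Correctness on orbits.} Take any $(p,q) \in S^I_{[0,T]}$. By \Cref{prop:W}, $W(p) = (p(0),q(0))$. By the definition of the semigroup, $\Phi(t,p(0),q(0)) = (p(t),q(t))$ for all $t \in \R^+$, so in particular for $t \in [T,T+\tau]$. Applying $\pi_p$ gives \eqref{eq:p_to_q_map3}.

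\textbf{Range.} The image lies in $S^I_{[T,T+\tau],p}$. This is because $W(p) \in I$: it equals $(p(0),q(0))$, which is an initial condition in $I$ generating the orbit.

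\textbf{Continuity.} $\Psi_F^\dagger$ factors as the composition of:
\begin{itemize}
\item $W \colon S^I_{[0,T],p} \to I$, continuous by \Cref{prop:W};
\item $x \mapsto \Phi(\placeholder, x)\!\restriction_{[T,T+\tau]}$;
\item pointwise application of $\pi_p$.
\end{itemize}

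For the middle map, I would apply the argument of \Cref{subsec:setup} on $[0,T+\tau]$ instead of $[0,T]$. Solutions exist for all $t \in \R^+$ by \Cref{assump:regularity}, so \cite[Theorem 2.10]{teschl2012ordinary} gives $\Phi \in C^k([0,T+\tau]\times I;\R^{d_p+d_q})$. Hence $x \mapsto \Phi(\placeholder,x) \in C^k([0,T+\tau];\R^{d_p+d_q})$ is continuous. Restriction to the subinterval $[T,T+\tau]$ is linear and bounded in the norm \eqref{eq:norm}, hence continuous. Composition with the linear projection $\pi_p$ is likewise bounded and linear in $C^k$.

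A composition of continuous maps is continuous, which gives the claim.

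\textbf{Main obstacle.} The only delicate point is continuity in the $C^k$ topology rather than merely in $C^0$. This reduces to joint $C^k$ regularity of $\Phi$ in $(t,x)$ on a compact set, and uniform continuity of its derivatives there. I expect to settle it by the same citation the paper already uses, so no new estimates should be needed.
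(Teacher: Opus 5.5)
Your proposal is correct and matches the paper's proof: the paper also defines $\Psi_F^\dagger(p)(t) = \pi_p \circ \Phi(t, W(p))$ for $T \le t \le T+\tau$, and it gets continuity by composing $W$, the solution map $\Phi$, and $\pi_p$. Your additional points (extending the $C^k$ regularity of $\Phi$ to $[0,T+\tau]\times I$, noting that restriction to $[T,T+\tau]$ is bounded, and checking that $W(p)\in I$ so the image lies in $S^I_{[T,T+\tau],p}$) make explicit details the paper leaves implicit.
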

\begin{proof}
We construct such a $\Psi_F^\dagger$ using $W$ from \Cref{prop:W} and composition with the solution operator $\Phi$. Define $\Psi_F^\dagger \colon S^I_{[0,T],p} \subset C^k([0,T];\R^{d_p}) \to S^I_{[T,T+\tau],p} \subset C^k([T, T+\tau]; \R^{d_p})$ by 
\begin{equation}
    \label{eq:p_to_q_map4}
    \Psi_F^\dagger(p)(t) = \pi_p \circ \Phi(t, W(p)), \quad T\leq t\leq T+\tau.
\end{equation}
We first recall that $\pi_p$ is continuous. By construction it holds that $W(p) = (p(0), q(0))$, hence $\pi_p \circ \Phi(t, W(p)) = {p}(t)$ for $T\leq t\leq T+\tau$. We observe that this is a continuous operator since $\Phi$ is continuous with respect to initial conditions, $W$ is continuous with respect to $p$ and composition of continuous operators is continuous.
\end{proof}
Leveraging the approximation properties of neural operators, we may now establish a universal approximation theorem for the solution of \Cref{prob:forecasting}, an instance of a forecasting problem.

\begin{theorem}[Universal Approximation for Forecasting]
\label{thm:UAforecasting}
Consider the dynamical system \eqref{eq:dynamical_system_intro} satisfying
regularity \Cref{assump:regularity}, for integer $k$, and the observability \Cref{assump:observability} at some $(\mathfrak{p},\mathfrak{q})\in\R^{d_p + d_q}$, for integer $n \le k.$ For any $\epsilon > 0$ there exists a transformer neural operator $\Psi(\placeholder;\theta)\colon S^I_{[0,T],p} \subset C^k ([0,T];\R^{d_p}) \to S^I_{[T,T+\tau],p} \subset C^k ([T,T+\tau];\R^{d_p})$ satisfying
\[\sup_{p \in S^I_{[0,T],p}} \|\Psi_F^\dagger(p) - \Psi(p;\theta)\|_{C^k([T,T+\tau];\R^{d_p})} \leq \epsilon. \]
\end{theorem}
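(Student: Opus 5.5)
The plan mirrors the proof of \Cref{thm:UAsmoothing}. It combines the existence result \Cref{prop:psiF} with the universal approximation result \Cref{thm:UANeuralOperator}. As before, the compact set $I$ is taken to be any compact subset of the neighbourhood $U$ furnished by the inverse function theorem at the point $(\fp,\fq)$ where \Cref{assump:observability} holds.

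\textbf{Step 1 (existence and continuity).} Invoke \Cref{prop:psiF} to obtain the continuous operator
\[
\Psi_F^\dagger\colon S^I_{[0,T],p}\subset C^k([0,T];\R^{d_p})\to S^I_{[T,T+\tau],p}\subset C^k([T,T+\tau];\R^{d_p}),
\]
given by $\Psi_F^\dagger(p)(t)=\pi_p\circ\Phi(t,W(p))$. The one point to check is that continuity holds in the $C^k$ topology on the output space, not merely in $C$. By \cite[Theorem 2.10]{teschl2012ordinary}, $\Phi\in C^k([0,T+\tau]\times I';\R^{d_p+d_q})$ on any compact $I'$ containing $W(S^I_{[0,T],p})$, which is compact as the continuous image of a compact set. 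Since $\Phi$ is $C^k$ jointly, its time derivatives up to order $k$ are uniformly continuous on $[T,T+\tau]\times I'$. Hence $x\mapsto\Phi(\placeholder,x)$ is continuous into $C^k([T,T+\tau])$, and composing with the continuous map $W$ from \Cref{prop:W} and the linear map $\pi_p$ gives continuity of $\Psi_F^\dagger$ into $C^k$.

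\textbf{Step 2 (compactness).} The input set $S^I_{[0,T],p}$ is compact in $C^k([0,T];\R^{d_p})$, as shown in \Cref{subsec:setup}.

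\textbf{Step 3 (reconciling domains, the main obstacle).} \Cref{thm:UANeuralOperator} is stated for operators between function spaces on the same domain $\overline D$, whereas here the input lives on $[0,T]$ and the output on $[T,T+\tau]$. I would resolve this with the affine reparametrisation $\iota(s)=T+\tau s/T$, which maps $[0,T]$ onto $[T,T+\tau]$. Define $\widetilde\Psi^\dagger(p)=\Psi_F^\dagger(p)\circ\iota$, an operator from $C^k([0,T];\R^{d_p})$ to $C^k([0,T];\R^{d_p})$. Composition with $\iota$ is a linear isomorphism between the two $C^k$ spaces, with operator norms bounded by constants $c_\pm$ depending only on $\tau/T$ and $k$ (through powers of $\tau/T$ from the chain rule). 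Therefore $\widetilde\Psi^\dagger$ is continuous.

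\textbf{Step 4 (approximation).} Apply \Cref{thm:UANeuralOperator} with $D=(0,T)$, $r=r'=d_p$, $s=s'=k$ and $K=S^I_{[0,T],p}$, at accuracy $\epsilon/c_+$. This yields a transformer neural operator $\widetilde\Psi$ approximating $\widetilde\Psi^\dagger$ uniformly on $K$ in the $C^k$ norm. Set $\Psi(p;\theta)=\widetilde\Psi(p;\theta)\circ\iota^{-1}$, which amounts to evaluating the transformer output at rescaled query times, so it remains a transformer neural operator with output on $[T,T+\tau]$. Then
\[
\sup_{p\in K}\|\Psi_F^\dagger(p)-\Psi(p;\theta)\|_{C^k([T,T+\tau])}\le c_+\,\sup_{p\in K}\|\widetilde\Psi^\dagger(p)-\widetilde\Psi(p;\theta)\|_{C^k([0,T])}\le\epsilon.
\]
An alternative that avoids the rescaling is to view the output as a function on $[0,T+\tau]$ and restrict it; the rescaling route seems cleaner.
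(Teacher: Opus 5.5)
Your proposal is correct and follows essentially the same route as the paper's proof: continuity of $\Psi_F^\dagger$ from \Cref{prop:psiF}, compactness of $S^I_{[0,T],p}$, an affine change of time variable so that input and output live on a common domain, an application of \Cref{thm:UANeuralOperator}, and a pullback with a norm-equivalence constant. The differences are cosmetic. The paper sends both intervals to $[0,1]$ via $R_T$, $\Lambda_T$, $R_\tau$, whereas you map only the output interval onto $[0,T]$. Two of your steps are a little more careful than the paper's: you check explicitly that $\Psi_F^\dagger$ is continuous into $C^k([T,T+\tau];\R^{d_p})$, and you bound the rescaled error by an inequality with a constant, where the paper writes an equality.
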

\begin{proof}
    We know that $\Psi_F^{\dagger}$ exists and is continuous since the assumptions of \Cref{prop:psiF} are satisfied. Recall that $S^I_{[0,T],p}$ is compact. We apply the result of \Cref{thm:UANeuralOperator} to an operator defined via linear homeomorphisms of $\Psi_F$, which we will then use to establish the approximation result in its final form. Indeed, we define the translation map $\Lambda_T:C^k([T,T+\tau];\R^{d_p})\to C^k([0,\tau];\R^{d_p})$ so that
    \[(\Lambda_T \varphi)(s) = \varphi(T+s), \qquad s\in[0,\tau], \]
    for any $\varphi\in C^k([T,T+\tau];\R^{d_p})$. We note that 
    \[\|\varphi\|_{C^k([T,T+\tau];\R^{d_p})} = \|\Lambda_T\varphi\|_{C^k([0,\tau];\R^{d_p})}.\]
    It is also possible to define the rescaling operator $R_\tau:C^k([0,\tau];\R^{d_p})\to C^k([0,1];\R^{d_p})$ so that
    \[
    (R_\tau\varphi)(s) = \varphi(\tau s), \qquad s\in[0,1],
    \]
    for any $\varphi\in C^k([0,\tau];\R^{d_p})$. We similarly define the rescaling operator $R_T$. We note that these rescaling operators are linear homeomorphisms, preserving equivalence of norms up to multiplicative constants. We therefore define the translated and rescaled forecasting operator $\widetilde{\Psi}^\dagger_F:R_T(S^I_{[0,T],p})\subset C^k([0,1];\R^{d_p})\to C^k([0,1];\R^{d_p})$ defined via the composition
    \[
    \widetilde{\Psi}^\dagger_F = R_\tau \circ \Lambda_T \circ {\Psi}^\dagger_F \circ R_T^{-1},
    \]
    which is continuous by continuity of all the operators in the composition. We note that $R_T(S^I_{[0,T],p})$ is compact since $R_T$ is a homeomorphism and therefore preserves compactness. Hence, by \Cref{thm:UANeuralOperator}, it is possible to deduce that for any $\delta>0$ there exists a neural operator $\widetilde{\Psi}(\placeholder,\theta): R_T(S^I_{[0,T],p})\subset C^k([0,1];\R^{d_p})\to C^k([0,1];\R^{d_p})$ satisfying
    \begin{equation}
    \label{eq:approximant}
    \sup_{p \in R_T(S^I_{[0,T],p})} \|\widetilde{\Psi}_F^\dagger(p) - \widetilde{\Psi}(p;\theta)\|_{C^k([0,1];\R^{d_p})} \leq \delta. 
    \end{equation}
    This indeed follows from a direct application of \Cref{thm:UANeuralOperator} with $D = [0, 1]$, $r = d_p$, $r' = d_p$, $\Psi^{\dagger} = \widetilde{\Psi}_F^{\dagger}$, and $K = R_T(S^I_{[0,T],p})$. Defining $\Psi(\placeholder,\theta):C^k ([0,T];\R^{d_p}) \to C^k ([T,T+\tau];\R^{d_p})$ via the composition
    \[
    \Psi(\placeholder,\theta) = \Lambda_T^{-1}\circ R_\tau^{-1} \circ \widetilde{\Psi}(\placeholder;\theta)\circ R_T,
    \]
    it is readily deduced that for any $p \in S^I_{[0,T],p}$ it holds that
    \[
    \|\Psi_F^\dagger(p) - \Psi(p;\theta)\|_{C^k([T,T+\tau];\R^{d_p})} = C\|\widetilde{\Psi}_F^\dagger(R_Tp) - \widetilde{\Psi}(R_Tp;\theta)\|_{C^k([0,1];\R^{d_p})},
    \]
    for a constant $C$ depending on $T$ and $\tau$. Therefore, choosing $\epsilon=\delta/C$ and applying the result established in \eqref{eq:approximant} yields the conclusion.
    \end{proof}

\section{Data-Driven Approximation of Smoothing and Forecasting Maps}
\label{sec:experiments}

The theoretical developments of \Cref{sec:UAforDA}  prove the existence of, and universal approximation theorems for,
maps that underpin data-driven smoothing and forecasting. In this section we demonstrate that
it is possible to learn these maps, purely from data, in practice. In \Cref{ssec:esu} we describe the experimental
set-up in general, followed in \Cref{ssec:summ} by a summary of the numerical results.
We describe details of the experiment with the Lorenz `63 \cite{lorenz63} and Lorenz `96 \cite{lorenz96} dynamical systems in \Cref{subsec:lorenz63} and \Cref{subsec:lorenz96}, respectively; experiments with the Kuramoto-Sivashinsky equation \cite{kuramoto1978diffusion, michelson1977nonlinear} are contained in \Cref{subsec:lorenz96}.

\subsection{Experimental Set-Up} \label{ssec:esu} To train the neural operators solving the data assimilation problems we consider the following data scenario. For each dynamical system, the set $\{p^{(j)}, q^{(j)} \}_{j=1}^J$ of input-output pairs are generated using the dynamics in \eqref{eq:dynamical_system_intro}, given i.i.d. initial conditions $\bigl(p^{(j)}(0),q^{(j)}(0)\bigr)\sim\nu$. Measure $\nu$ is computed as the pushforward, over some burn-in time, of the simpler measure $\nu_0.$ Burn-in is used to ensure that $\nu$ is close to the invariant measure
supported on the global attractor of the dynamical system. For each dynamical system we consider the existence of operators $\Psi^\dagger_{S}:C([0,T])\to C([0,T])$ and $\Psi^\dagger_{F}:C([0,T])\to C([T,T+\tau])$ defined as the mappings\footnote{We note that the existence of the operators in \eqref{eq:ops_numerics} is not directly guaranteed by \Cref{prop:psiS} and \Cref{prop:psiF}, as the former are defined on the whole of $C([0,T])$, rather than a local domain of existence. We note that establishing the existence of the operators in \eqref{eq:ops_numerics} constitutes an important avenue for future work, which we highlight in \Cref{sec:conclusions}. We assume their existence for our numerical investigations.}
\begin{equation}
\label{eq:ops_numerics}
    \Psi^\dagger_{S}:\{p(t)\}_{t\in [0,T]}\mapsto \{q(t)\}_{t\in [0,T]}\quad \Psi^\dagger_{F}:\{p(t)\}_{t\in [0,T]}\mapsto \{p(t)\}_{t\in [T,T+\tau]}.
\end{equation}
We train transformer neural operators \cite{calvello2024continuum} to construct approximations of $\Psi^\dagger_{S}$ and $\Psi^\dagger_{F}$ picked from the parametric class
$$\Psi_{S}:C([0,T])\times \Theta \to C([0,T]), \quad \Psi_{F}:C([0,T])\times \Theta \to C([T,T+\tau])$$
respectively. In all cases we use the transformer neural operator based on self-attention from \cite{calvello2024continuum} to approximate the $\Psi^\dagger_{S}$. To approximate $\Psi^\dagger_{F}$ in the context of Lorenz `63 dynamics we also use the same self-attention based transformer neural operator. However, to approximate 
$\Psi^\dagger_{F}$ for the Lorenz `96 and Kuramoto-Sivashinsky models we employ a variant of the transformer neural operator architecture which uses cross-attention. Architectural details as well as the universal approximation properties for these neural operators are discussed in more depth in \Cref{appendix:architectures}. 

The set $\Theta$ is assumed to be a finite dimensional parameter space from which a $\theta^*\in\Theta$ is selected so that $\Psi_{\placeholderB}(\placeholder,\theta^*) \approx \Psi_{\placeholderB}^\dagger$. (Here $\placeholderB$ is either $S$ or $F$.) To define
$\theta^*$ we employ a cost functional $c$ and solve the following optimization problem:
\begin{equation}
\label{eq:data_assimilation_model}
\theta^*={\arg \min}_{\theta\in\Theta}\mathbb{E}_{\placeholder}\Bigl[ c\bigl(\Psi_{\placeholderB}(\placeholder,\theta), \Psi_{\placeholderB}^\dagger(\placeholder) \bigr)\Bigr].
\end{equation}
In practice, we will only have access to the values of each input and output trajectory at a set of discretization points of the domain, which we denote as $\{t_i\}_{i=1}^N$. The self-attention based transformer neural operator that we deploy for smoothing uses the same number of grid-points for the input and output; employing a cross-attention based transformer neural operator is used to bypass this constraint on matching input and output grids for forecasting in some settings, where the input is on the time-interval $[0,T]$ and the output on $[T,T+\tau]$. We note that due to its discretization invariant properties, this architecture can be used to obtain a prediction for the output at any arbitrary discretization of $[T,T+\tau]$. For the smoothing experiments we use $6$ self-attention transformer neural operator layers with $128$ latent channels, for each self-attention operator, we use $8$ attention heads. We also use this set-up for forecasting in the Lorenz `63 context. For the other forecasting experiments we use an architecture composed of an encoder and a decoder: the encoder is comprised of $6$ self-attention transformer neural operator layers with $1024$ latent channels; the decoder on the other hand is comprised of a single layer consisting of a self-attention operator and a cross-attention operator both with $1024$ latent channels. 

\subsection{Summary of Results from Numerical Experiments} \label{ssec:summ}

We provide here a summary of the experimental results. Details of the exact experimental set-up, for each problem, are
considered in the subsections that follow. We start, in \Cref{tab:smoothing_results}, where
we collect the results of the smoothing experiments for the various systems, reporting relative $L^2$ errors between the true unobserved trajectory and the prediction generated by the neural operator. All errors are reported on test data not used for training. 
 
\begin{table}[h]
\footnotesize
\centering
\caption{Smoothing performance on Lorenz `63 (L63), Lorenz `96 (L96), and the Kuramoto--Sivashinsky equation (KSE).}
\label{tab:smoothing_results}
\begin{tabular}{lccccccc}
\toprule
\textbf{System} 
& \textbf{Avg. Rel. $L_2$} 
& \textbf{Med. Rel. $L_2$} 
& \textbf{Std. Rel. $L_2$} 
& \textbf{Min. Rel. $L_2$} 
& \textbf{Max. Rel. $L_2$} 
%& \textbf{Mean Err} 
%& \textbf{Var Err} 
\\
\midrule
L63 
& 0.012426 
& 0.012402 
& 0.001612 
& 0.008761 
& 0.017275 
%& 0.003013 
%& 0.007778 
\\
L96 
& 0.047156 
& 0.044635 
& 0.014569 
& 0.025633 
& 0.258743 
%& 0.011351 
%& 0.019663 
\\
KSE 
& 0.009433 
& 0.009322 
& 0.000755 
& 0.007678 
& 0.013157 
%& 0.001073 
%& 0.004319 
\\
\bottomrule
\end{tabular}
\end{table}

In \Cref{tab:forecasting_results} we report the results of the forecasting experiments. For each system, we report the mean, median and standard deviation of the relative $L^2$ errors obtained on the sample trajectories in the test set. We also report the relative improvement achieved by the neural operator prediction when compared to the constant prediction taken using the final value of the input trajectory. We note that the chaotic nature of the problems considered means that long-term trajectory forecasting is difficult. However we can compose forecasts and look
at the resulting statistics of the time-series -- in particular the invariant measure of the resulting stationary process; these are well-approximated. Thus, for each of the systems, we report in the respective subsections the statistics of the long-time forecasts obtained via composition of the trained neural operators. The results in \Cref{tab:forecasting_results} along with the accurate prediction of trajectory statistics demonstrate the success of using a neural operator forecasting approach.

\begin{table}[htbp]
\footnotesize
\centering
\caption{Forecasting performance on Lorenz `63 (L63), Lorenz `96 (L96), and the Kuramoto--Sivashinsky equation (KSE). In addition to the relative $L^2$ errors achieved on the test set, we also report the relative improvement of $L^2$ errors achieved with the transformer neural operator compared to a constant forecast equal to the last input state.}
\label{tab:forecasting_results}
\begin{tabular}{lcccc}
\toprule
\textbf{System} 
& \textbf{Avg. Rel. $L_2$} 
& \textbf{Med. Rel. $L_2$} 
& \textbf{Std. Rel. $L_2$} 
& \textbf{Rel. Improvement (\%)} \\
\midrule
L63  & 0.050 & 0.033 & 0.100 & 95.53 \\
L96  & 0.033 & 0.032 & 0.004 & 94.38 \\
KSE  & 0.046 & 0.045 & 0.007 & 82.68 \\
\bottomrule
\end{tabular}
\end{table}

% \begin{figure}[t]
%   \centering

%   % --- top row ---
%   \begin{subfigure}[t]{0.48\textwidth}
%     \centering
%     \includegraphics[width=\linewidth]{figures/L63_histogram_comparison.png}
%     \caption{}
%     \label{fig:tri-a}
%   \end{subfigure}\hfill
%   \begin{subfigure}[t]{0.48\textwidth}
%     \centering
%     \includegraphics[width=\linewidth]{figures/L96_histogram_comparison.png}
%     \caption{}
%     \label{fig:tri-b}
%   \end{subfigure}
%   % --- bottom centered ---
%   \begin{subfigure}[t]{0.48\textwidth}
%     \centering
%     \includegraphics[width=\linewidth]{figures/L96_histogram_comparison.png}
%     \caption{}
%     \label{fig:tri-c}
%   \end{subfigure}
%   \caption{In this panel we display the distributions of points in the predicted and ground truth trajectories. In (a) we display the result
%   \label{fig:tri}
% \end{figure}

% \begin{table}[htbp]
% \footnotesize
% \centering
% \caption{Comparison of trajectory-level error statistics between learned forecasting models and constant baselines across L63, L96, and KSE systems.}
% \label{tab:trajectory_stats}
% \begin{tabular}{lcccc}
% \toprule
% \textbf{Errors} 
% & \textbf{Mean (Learned)} 
% & \textbf{Mean (Baseline)} 
% & \textbf{Var. (Learned)} 
% & \textbf{Var. (Baseline)} \\
% \midrule
% L63 & 0.034 & 0.215 & 0.0023 & 0.0238 \\
% L96 & 0.048 & 0.187 & 0.0079 & 0.0217 \\
% KSE & 0.032 & 0.093 & 0.0026 & 0.0055 \\
% \bottomrule
% \end{tabular}
% \end{table}

In the following subsections, we describe the experimental set-up for the various systems and provide analysis of the numerical results.

\subsection{Lorenz `63}
\label{subsec:lorenz63}

We study the Lorenz `63 dynamical system \cite{lorenz63} in the form given in \eqref{eq:L63_dynamic}.
For the purposes of our experiments, the parameters are set to the classical values
$\sigma = 10,\ \rho = 28,\ \beta = \tfrac{8}{3}$ at which the system has provable stable chaotic and statistical properties \cite{tucker1999lorenz,holland2007central}. The initial conditions at time $t=-20$, for both training and test sets, are sampled i.i.d from the uniform distributions $U([-15,15])$, $U([-15,15])$ and $U([0,40])$, respectively; this defines $\nu_0.$ We use a burn-in time of $20$, ensuring that the trajectories are close to the global attractor at time $t=0$, sampled from measure $\nu$, approximating the invariant measure supported on the global attractor. For the \Cref{prob:smoothing} smoothing experiment we use the $x$ trajectory on a time interval of $[0,5]$ as input data and predict the coupled $y,z$ trajectories over the same time interval. For practical implementation, we use points on the trajectories sampled at uniform intervals $\Delta t= 0.02$. For the \Cref{prob:forecasting} forecasting experiment we use the $x$ trajectory on a time interval of $[0,2]$ as input data and predict the coupled $x$ trajectory over the future time $[2,4]$. For practical implementation, we use points on the input and output trajectories sampled at uniform intervals $\Delta t= 0.01$. We train the neural operator solving the smoothing problem using $10000$ training trajectories and the one solving the forecasting problem using $200000$ trajectories. For both experiments we use a test set of $2000$ trajectories. We note the higher data complexity required to solve the forecasting problem.

We summarize the parameter and experimental settings for the Lorenz `63 dynamical system in \Cref{tab:lorenz63_settings}.
\begin{table}[h]
\footnotesize
\centering
\caption{Lorenz `63 system settings}
\label{tab:lorenz63_settings}
\begin{tabular}{lcc}
    \toprule
    \textbf{Category} & \textbf{Smoothing} & \textbf{Forecasting} \\
    \midrule
    Parameters 
        &\multicolumn{2}{c}{ $\sigma = 10,\ \rho = 28,\ \beta = \tfrac{8}{3}$} \\
    \midrule
    Observed 
        & $x \in C\bigl([0,5]; \bbR \bigr)$ 
        & $x \in C\bigl([0,2]; \bbR \bigr)$ \\
    \midrule
    Unobserved 
        & $(y,z) \in C\bigl([0,5]; \bbR^2 \bigr)$ 
        & $(y,z) \in C\bigl([2,4]; \bbR^2 \bigr)$ \\
    \midrule
    Observation time step 
        & $\Delta t = 0.02$
        & $\Delta t = 0.01$ \\
    \bottomrule
\end{tabular}
\end{table}
In \Cref{fig:L63_smoothing} we demonstrate the qualitative results of the smoothing experiment. Indeed, we display the predicted trajectories corresponding to the median and highest relative $L^2$ errors in the test set compared to true trajectories. The panels showcase predictions that are qualitatively indistinguishable from the truth, indicating the success of the purely data-driven smoothing approach.
\begin{figure}
    \centering
    \includegraphics[width=\linewidth]{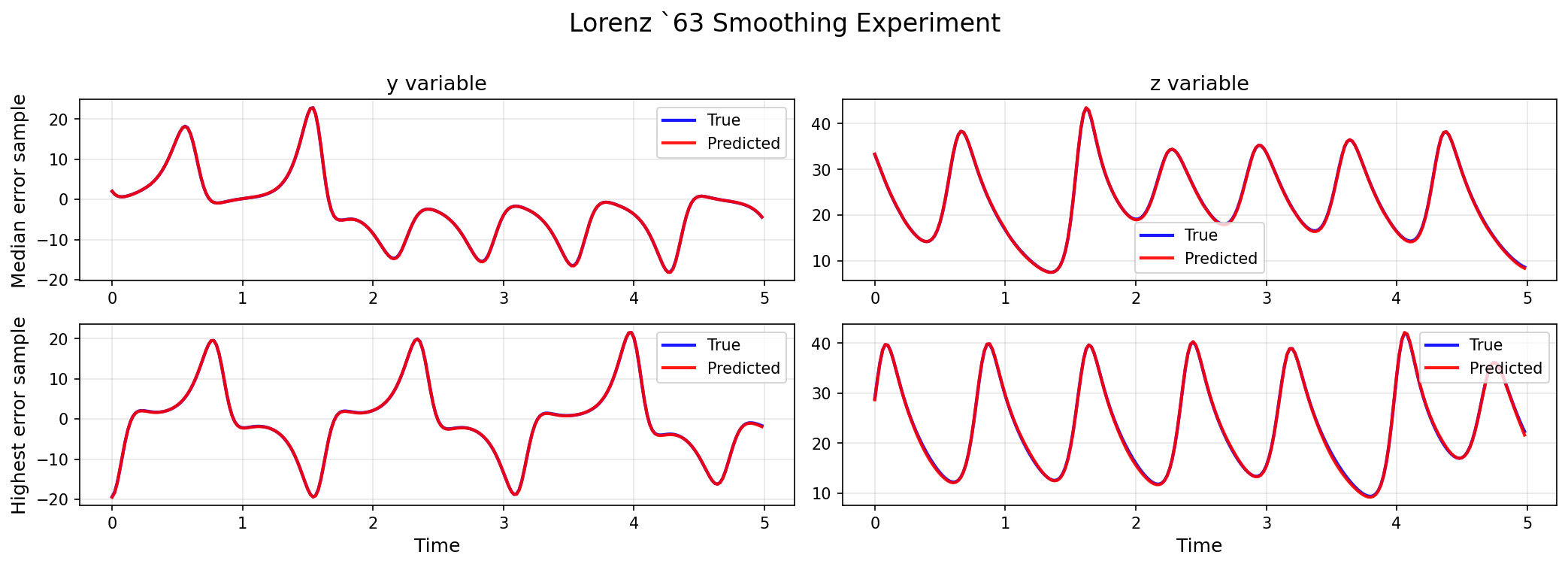}
    \caption{Lorenz `63. Median and worst-case relative $L^2$ error samples for smoothing experiment involving prediction of $(y,z)$ from $x$.}
    \label{fig:L63_smoothing}
\end{figure}

To highlight the importance of observability, we experiment with the smoothing setting of mapping the $z$ trajectory over the time interval $[0,5]$ to the $x,y$ trajectories over the same time interval. We use the same model and training setup as before. \Cref{fig:L63_smoothing_xy} displays the predicted trajectories corresponding to the median and highest relative $L^2$ errors in the test set compared to true trajectories. The panels showcase the failure in the predictions, which achieve a median relative $L^2$ error of $1$; it is also interesting to note that the model seems to predict the $0$ trajectory, possibly due to the reflection argument discussed in \Cref{ex:Lorenz}.
\begin{figure}
    \centering
    \includegraphics[width=\linewidth]{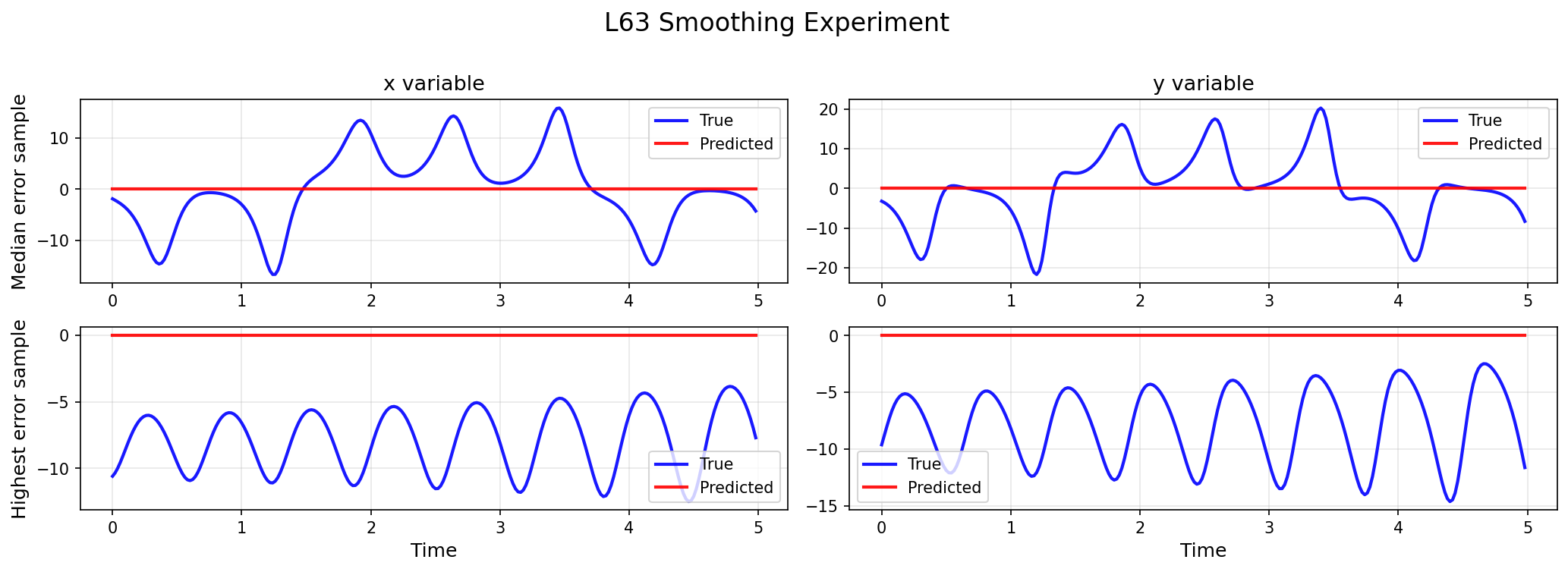}
    \caption{Lorenz `63. Median and worst-case relative $L^2$ error samples for smoothing experiment involving prediction of $(x,y)$ from $z$.}
    \label{fig:L63_smoothing_xy}
\end{figure}

In \Cref{fig:L63_forecasting_all} we demonstrate the qualitative results of the forecasting experiment, based on data from the $x$ coordinate. Indeed, in \Cref{fig:L63_forecasting} we display the predictions for the trajectories corresponding to the median and highest relative $L^2$ errors in the test set compared to ground truths. We note that the prediction for the test sample presenting the largest error still resembles a possible trajectory of the Lorenz `63 system. Indeed, the prediction diverges to another lobe of the attractor; this behavior is to be expected in prediction problems in the context of chaotic systems. With the goal of obtaining forecasts on longer time horizons, we experiment with composition of the learned forecasting map $\Psi_F:C([0,2])\to C([2,4])$ to obtain $\Psi_F^n:C([0,2])\to C([2,2+2n])$. In \Cref{fig:L63_forecasting_hist}, for $n=500$ we compare the distribution (histogram) of points in the predicted future trajectories $\widehat{p}$ to points in the ground truth trajectories $p$. The matching of distributions, and hence statistics, indicates the success in predicting valid trajectories from the attractor of the Lorenz system, despite the chaotic nature of the equations and hence per-trajectory divergence. As further validation of the quantitative success of the purely data-driven forecasting approach, we recall the numerical comparison with the prediction given by the constant value of $x(T)$ for $T=2$, that is taking the predicted trajectory to be $\widehat{p}(t)=x(2)$ for $t\in[2,4]$. \Cref{tab:forecasting_results} shows the improvement in accuracy yielded by the cross-attention based transformer neural operator, in comparison to this naive approach.
\begin{figure}[htbp]
    \centering
    \begin{subfigure}{0.48\linewidth}
        \centering
        \includegraphics[width=\linewidth]{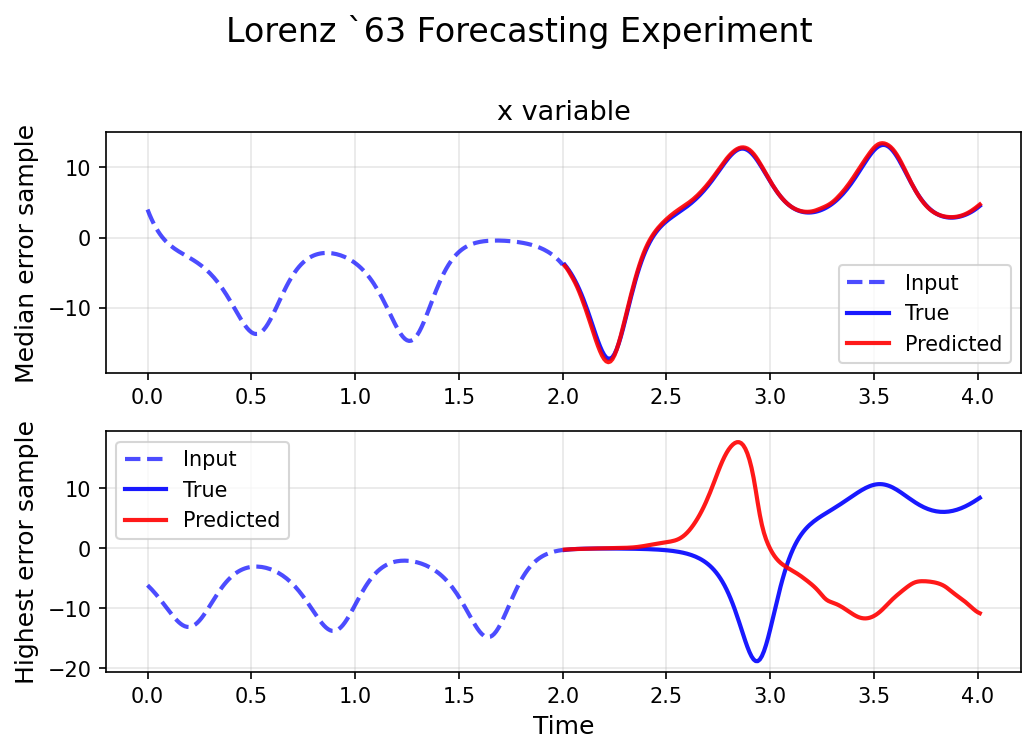}
        \caption{}
        \label{fig:L63_forecasting}
    \end{subfigure}\hfill
    \begin{subfigure}{0.48\linewidth}
        \centering
        \includegraphics[width=\linewidth]{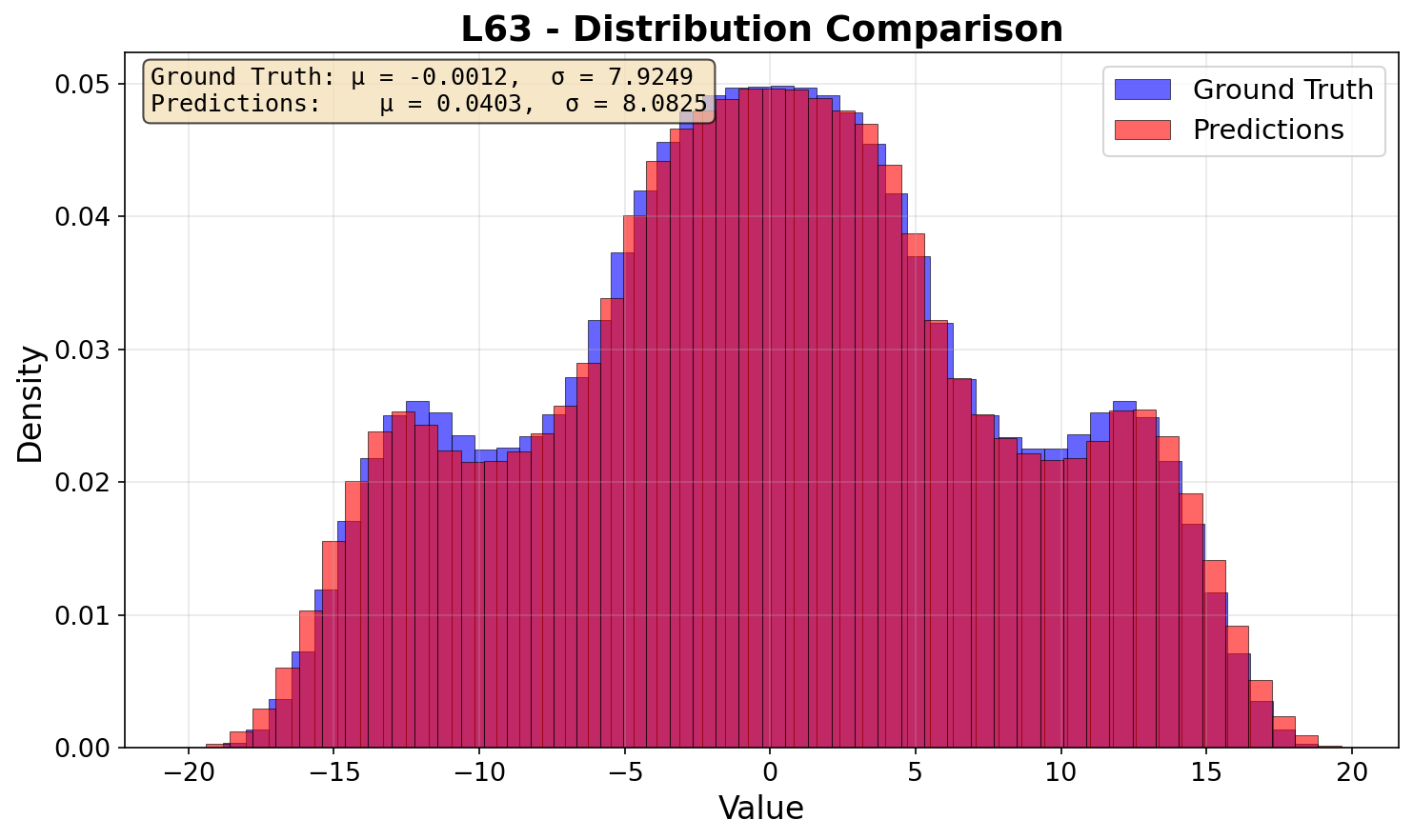}
        \caption{}
        \label{fig:L63_forecasting_hist}
    \end{subfigure}
    \caption{Lorenz `63. Left: median and worst-case relative $L^2$ error samples in forecasting.
    Right: distribution of trajectory predictions under composition of the learned forecasting map; the dark shade of red represents overlap between the blue ground truth distribution and the light red predicted distribution. The prevalence of the dark shade of red thus indicates matching of distributions.}
    \label{fig:L63_forecasting_all}
\end{figure}

\subsection{Lorenz `96}
\label{subsec:lorenz96}

The Lorenz `96 model \cite{lorenz96} consists of a linearly damped and externally forced system equipped with an energy-conserving quadratic nonlinearity, which generates cyclic interactions among the state variables. These properties make the model a standard testbed for investigating atmospheric predictability. The system dynamics are governed by the following equations:
\begin{equation}\label{eq:L96_equations}
    \frac{du_i}{dt} = \bigr(u_{i+1} - u_{i-2}\bigl)u_{i-1} - u_i + F,\quad i=1,2,\ldots, d.
\end{equation}
Periodicity is imposed to define variables $u_j$ with indices outside the set $j=1,2,\ldots, d.$
For the purposes of our experiments we set $d=40$ and $F=8$. For all components $u_i$ the initial conditions for training and test sets are found by first selecting initial conditions drawn from $F+U([-1,1])$, 
i.i.d. in both index $i$ and random choice of initialization at time $t=-200$, and  then simulating the system using a burn-in time of $200$ to ensure the initializations of the trajectories at $t=0$ are sampled i.i.d. from a measure approximating the invariant measure supported on the global attractor at time $t=0$. For the smoothing experiment we take trajectories over a time interval of $[0,5]$ as input data and predict the unobserved trajectories over the same time interval; we implement using points on the trajectories sampled at uniform intervals $\Delta t= 0.02$. For the forecasting experiment we also use trajectories on a time interval of $[0,5]$ as input data and predict observed trajectories over the future time $[5,5.2]$. We implement using points on the input and output trajectories sampled at uniform intervals $\Delta t= 0.02$. We train the neural operator solving the smoothing problem using $10000$ training trajectories and the one solving the forecasting problem using $160000$ trajectories. For both experiments we use a test set of $2000$ trajectories. We again note the higher data complexity required to solve the forecasting problem. 

We summarize the parameter and experimental settings for the Lorenz `96 dynamical system in \Cref{tab:lorenz96_settings}.
\begin{table}[h]
\footnotesize
\centering
\caption{Lorenz `96 system settings}
\label{tab:lorenz96_settings}
\begin{tabular}{lcc}
    \toprule
    \textbf{Category} & \textbf{Smoothing} & \textbf{Forecasting} \\
    \midrule
    Parameters
       &\multicolumn{2}{c}{$F=8$} \\
    \midrule
    Observed
        & $(u_1,\ldots,u_{21},u_{23},u_{25},\ldots,u_{39}) \in C([0,5];\mathbb{R}^{30})$ & $(u_1,u_{2},u_{3},u_{5},\ldots,u_{39}) \in C([0,5];\mathbb{R}^{30})$
          \\
    \midrule
    Unobserved 
        & $(u_{22},u_{24},\ldots,u_{40}) \in C([0,5];\mathbb{R}^{10})$
        & $(u_1,u_{2},u_{3},u_{5},\ldots,u_{39}) \in C([5,5.2];\mathbb{R}^{30})$ \\
    \midrule
    Obs. time
        &\multicolumn{2}{c}{ $\Delta t = 0.02$ }
        \\
    \bottomrule
\end{tabular}
\end{table}
In \Cref{fig:L96_smoothing} we demonstrate the qualitative results of the smoothing experiment. Indeed, we display the predicted trajectories corresponding to the median and highest relative $L^2$ errors in the test set compared to true trajectories. The median error panel demonstrates near-perfect overlap between prediction and truth, while the highest error panel exhibits overlap only after $t=1$, indicating a possible effect of synchronization \cite{pecora1990synchronization}. Together, these results demonstrate the qualitative success of the purely data-driven smoothing approach. 
\begin{figure}[htbp]
    \centering
    \includegraphics[width=0.5\linewidth]{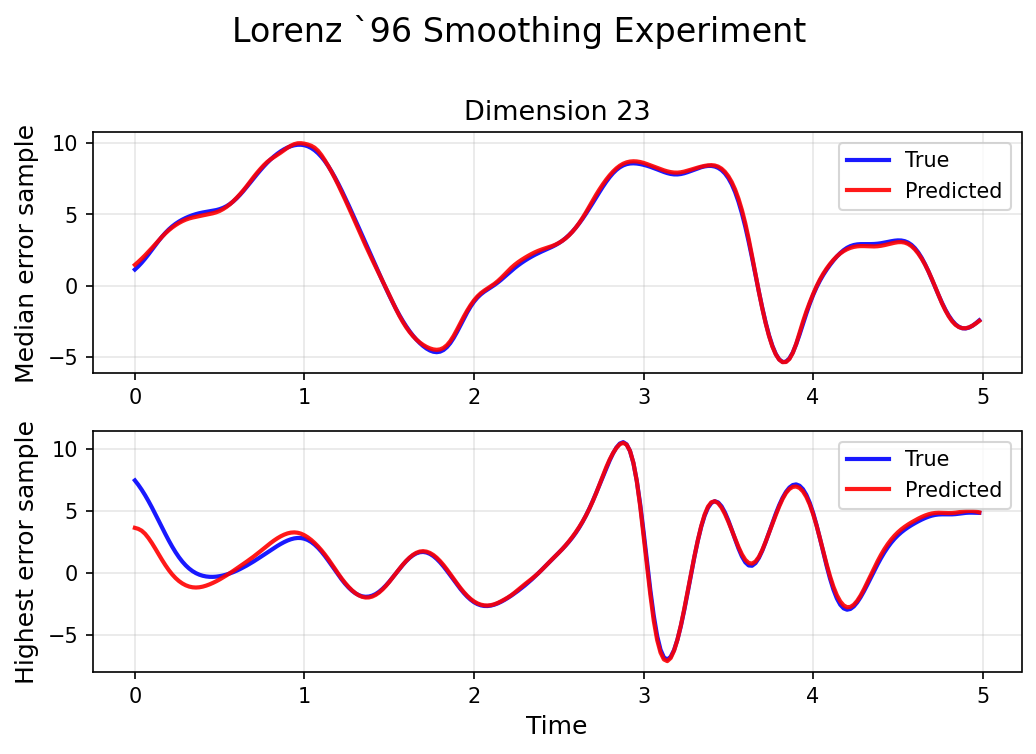}
    \caption{Lorenz `96. Median and worst-case performance on the Lorenz `96 system for smoothing.}
    \label{fig:L96_smoothing}
\end{figure}
In \Cref{fig:L96_forecast_spatiotemporal} we display results for the forecasting experiment. Here we show the pointwise errors relative to the root mean square of the ground truth (computed in space) for the sample in the test set achieving the median relative $L^2$ error. With the goal of obtaining forecasts on longer time horizons, we experiment with composition of the learned forecasting map $\Psi_F:C([0,5])\to C([5,5.2])$ to obtain $\Psi_F^n:C([0,5])\to C([5,5+0.2n])$. In \Cref{fig:L96_forecasting}, we show an example of predicted trajectories obtained by composing the map with $n=50$; the plot shows trajectory divergence after a handful of compositions, due to chaos and error compounding. However, in \Cref{fig:L96_forecasting_hist}, for $n=500$ we compare the distribution of points in the predicted future trajectories $\widehat{p}$ to the distribution of
points in the ground truth trajectories $p$. The matching of distributions, and hence statistics, indicates the success in predicting the invariant measure supported on the attractor of the Lorenz `96 system, despite the chaotic nature of the equations that results in trajectory divergence. As further validation of the quantitative success of the purely data-driven forecasting approach, we recall the numerical comparison with the prediction given by the constant value of $x(T)$ for $T=5$, that is taking the predicted trajectory to be $\widehat{p}(t)=p(5)$ for $t\in[5,5.2]$. \Cref{tab:forecasting_results} shows the improvement in accuracy yielded by the cross-attention based transformer neural operator, in comparison to this naive approach.
\begin{figure}[htbp]
    \centering
    \includegraphics[width=0.6\linewidth]{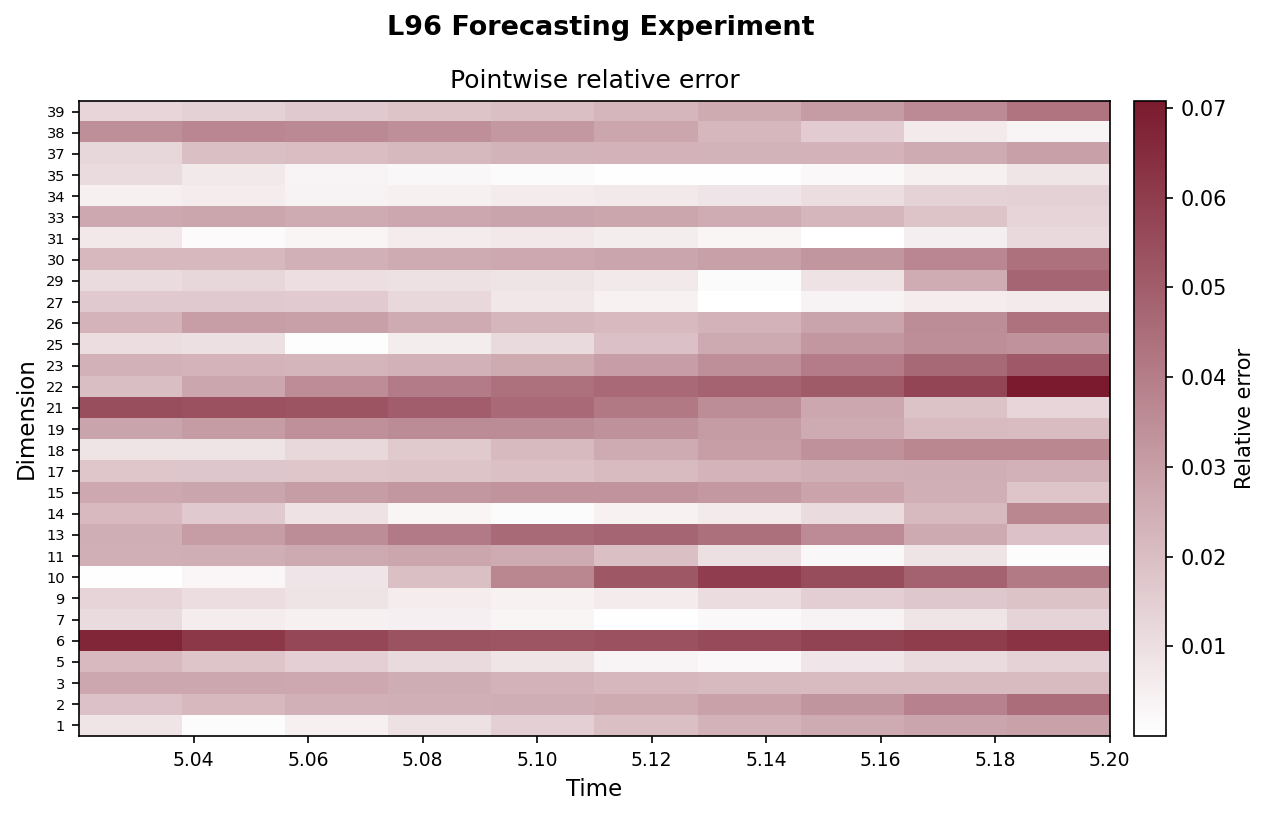}
    \caption{Lorenz `96. Pointwise errors relative to the root mean square of the ground truth (computed in space) for the sample in the test set achieving the median relative $L^2$ error.}
    \label{fig:L96_forecast_spatiotemporal}
\end{figure}
\begin{figure}[htbp]
    \centering
    \begin{subfigure}{0.48\linewidth}
        \centering
        \includegraphics[width=\linewidth]{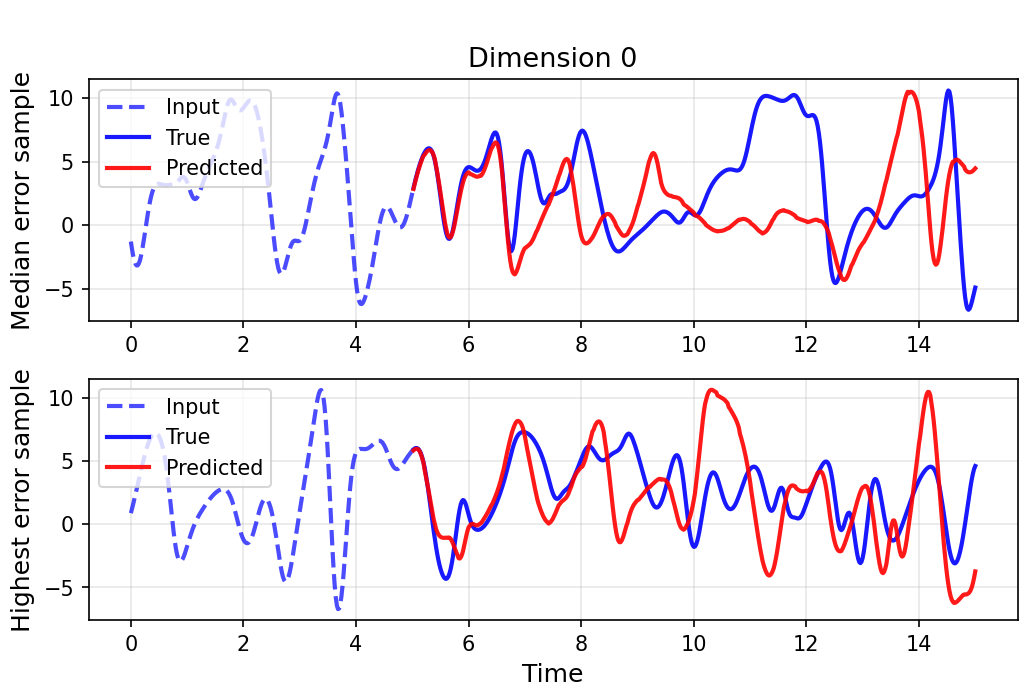}
        \caption{}
        \label{fig:L96_forecasting}
    \end{subfigure}\hfill
    \begin{subfigure}{0.48\linewidth}
        \centering
        \includegraphics[width=\linewidth]{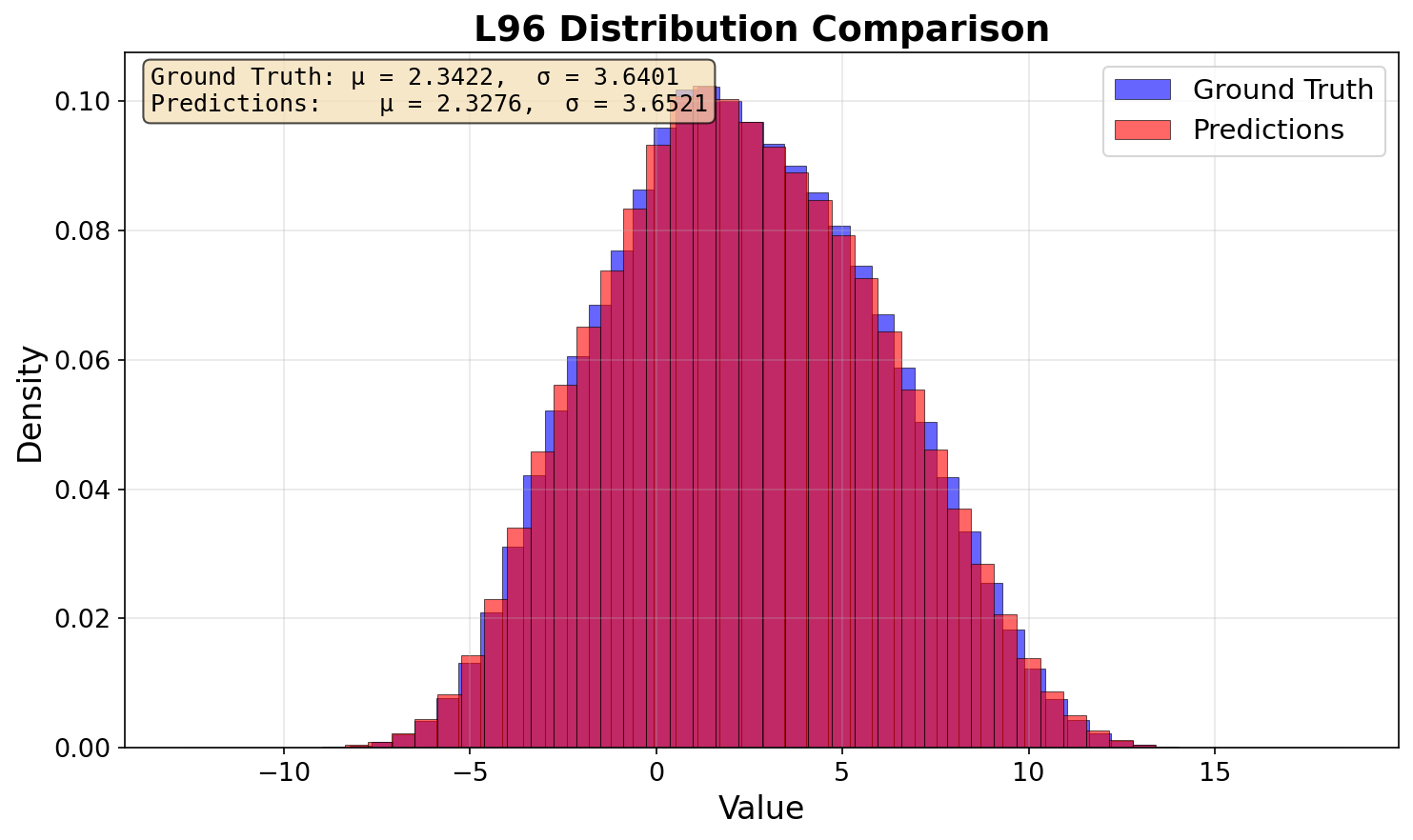}
        \caption{}
        \label{fig:L96_forecasting_hist}
    \end{subfigure}
    \caption{Lorenz `96. Left: forecast obtained by composing the learned map on one-step median and worst-case relative $L^2$ error samples. Right: distribution of trajectory predictions under composition of the learned forecasting map; the dark shade of red represents overlap between the blue ground truth distribution and the light red predicted distribution. The prevalence of the dark shade of red thus indicates matching of distributions.}
    \label{fig:L96_comparison}
\end{figure}

\subsection{Kuramoto-Sivashinsky}
\label{subsec:ks}

The Kuramoto–Sivashinsky (KS) equation \cite{kuramoto1978diffusion} is a well-studied nonlinear partial differential equation that models the development of instabilities in spatially extended systems. It exhibits rich spatiotemporal chaotic behavior and arises in a variety of physical contexts, including flame front propagation, thin film flows, and reaction–diffusion phenomena. With the periodicity in space imposed to identify the solution at $x=L$ and $x=0$, the one-dimensional KS equation for $u: [0,L] \times \mathbb{R}^+ \to \mathbb{R}$ takes the form
\begin{subequations}\label{eq:KS_equation}
\begin{align}
    \frac{\partial u}{\partial t} + \frac{\partial^4 u}{\partial x^4} + \frac{\partial^2 u}{\partial x^2} + u\frac{\partial u}{\partial x} &= 0, \quad
    (x,t) \in (0,L) \times \bbR^+, \\[1ex]
    u(x,0) & = u_0, \quad \forall x \in [0,L].
    \end{align}
\end{subequations}
For the purposes of our experiments, the domain size is set to $L=32\pi$. Numerically, the KS equation \eqref{eq:KS_equation} is discretized in space using a Fourier pseudospectral method, which naturally enforces periodic boundary conditions. Time integration is carried out using the exponential time-differencing fourth-order Runge–Kutta (ETDRK4) scheme \cite{kassam2005fourth}. This method treats the stiff linear operator analytically, thereby avoiding the explicit linear Courant–Friedrichs–Lewy (CFL) constraint associated with fully explicit Runge–Kutta schemes. As a result, the time step is selected according to accuracy requirements and nonlinear resolution rather than linear stability limitations. We simulate the dynamics using a burn-in time of $200$ to ensure the trajectories are independent and sampled from the attractor. The observed trajectories are taken to be filtered solutions of \eqref{eq:KS_equation}, obtained by zeroing out the high Fourier modes of $u$. We denote this filtered solution by $\widetilde{u}$. In the smoothing setting we experiment with, the goal is to recover the full solution 
$u$ from $\widetilde{u}$, where $\widetilde{u}$ is obtained by retaining the first $64$ modes of $u$ and zeroing out the rest. In the forecasting setting, $\widetilde{u}$ is obtained by retaining the first $32$ modes of $u$ and zeroing out the rest.

For the smoothing experiment we use the observed trajectories on a time interval of $[0,100]$ as input data and predict the unobserved trajectories over the same time interval. We implement using points on the trajectories sampled at uniform intervals $\Delta t= 0.25$. For the forecasting experiment we use the observed trajectories on a time interval of $[0,100]$ as input data and predict the the trajectories over the future time $[100,102]$.
 We implement using points on the input and output trajectories sampled at uniform intervals $\Delta t= 0.25$. 
We train the neural operator solving the smoothing problem using $10000$ training trajectories and the one solving the forecasting problem using $80000$ trajectories. For both experiments we use a test set of $2000$ trajectories. We note the higher data complexity required to solve the forecasting problem. We summarize the parameter and experimental settings for the Kuramoto-Sivashinsky equation in \Cref{tab:ks_settings}.
\begin{table}[h]
\footnotesize
\centering
\caption{Kuramoto--Sivashinsky (KS) system settings. We recall that $u$ denotes the solution, while $\widetilde{u}$ is obtained by retaining the first $64$ modes of $u$ and zeroing out the rest. }
\label{tab:ks_settings}
\begin{tabular}{lcc}
    \toprule
    \textbf{Category} & \textbf{Smoothing} & \textbf{Forecasting} \\
    \midrule
    Parameters
        &\multicolumn{2}{c}{ $L = 32\pi$}
        \\
    \midrule
     Spatial Grid
        &\multicolumn{2}{c}{
          $x_j = jL/128,\quad j=1,\ldots,128$ %\quad
          %$ \bigl(u(x_1),\ldots,u(x_{128})\bigr) \in C([0,100];\bbR^{128})$
          }
          \\
    \midrule
    Observed
        & $ \bigl(\widetilde{u}(x_1),\ldots,\widetilde{u}(x_{128})\bigr) \!\in C([0,100];\bbR^{128})$
        & $ \bigl(\widetilde{u}(x_1),\ldots,\widetilde{u}(x_{128})\bigr)  \!\in C([0,100];\bbR^{128})$ \\
    \midrule
    Unobserved
        & $ \bigl(u(x_1),\ldots,u(x_{128})\bigr) \!\in C([0,100];\bbR^{128})$
        & $ \bigl(\widetilde{u}(x_1),\ldots,\widetilde{u}(x_{128})\bigr)  \!\in C([100,102];\bbR^{128})$ \\
    \midrule
    Obs. time
        &\multicolumn{2}{c}{ $\Delta t = 0.25$ }
        \\
    \bottomrule
\end{tabular}
\end{table}
In \Cref{fig:kse_smoothing} we demonstrate the qualitative results of the smoothing experiment. Indeed, we display the predicted trajectories corresponding to the median and highest relative $L^2$ errors in the test set compared to true trajectories. The panels showcase predictions that are qualitatively indistinguishable from the truth, indicating the success of the purely data-driven smoothing approach.
\begin{figure}[htbp]
    \centering
    \includegraphics[width=0.5\linewidth]{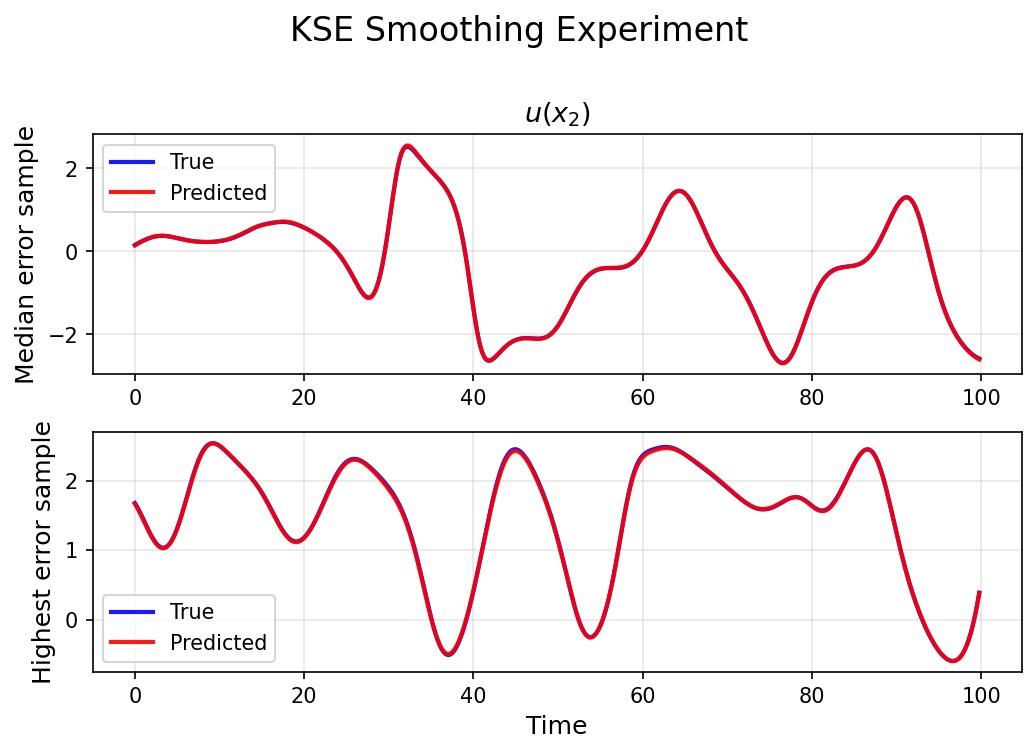}
    \caption{Median and worst-case performance on the KS system for smoothing.}
    \label{fig:kse_smoothing}
\end{figure}
In \Cref{fig:kse_spatiotemporal} we display results for the forecasting experiment. Here we show the pointwise errors relative to the root mean square of the ground truth (computed in space) for the sample in the test set achieving the median relative $L^2$ error. With the goal of obtaining forecasts on longer time horizons, we experiment with composition of the learned forecasting map $\Psi_F:C([0,100])\to C([100,102])$ to obtain $\Psi_F^n:C([0,100])\to C([100,100+2n])$. In \Cref{fig:kse_forecasting}, we show an example of predicted trajectories obtained by composing the map with $n=100$; the plot shows trajectory divergence after a handful of compositions, due to chaos and error compounding. However, in \Cref{fig:kse_forecasting_hist}, for $n=1000$ we compare the distribution of points in the predicted future trajectories $\widehat{p}$ to
the distribution of points in the ground truth trajectories $p$. The matching of distributions, and hence statistics, indicates the success in predicting valid trajectories from the attractor of the KS system, despite the chaotic nature of the equations and hence per-trajectory divergence. As further validation of the qualitative success of the purely data-driven forecasting approach, we recall the numerical comparison with the prediction given by the constant value of $p(T)$ for $T=100$, that is taking the predicted trajectory to be $\widehat{p}(t)=p(100)$ for $t\in[100,102]$. \Cref{tab:forecasting_results} shows the improvement in accuracy yielded by the transformer neural operator, in comparison to this naive approach.
\begin{figure}[htbp]
    \centering
    \includegraphics[width=0.6\linewidth]{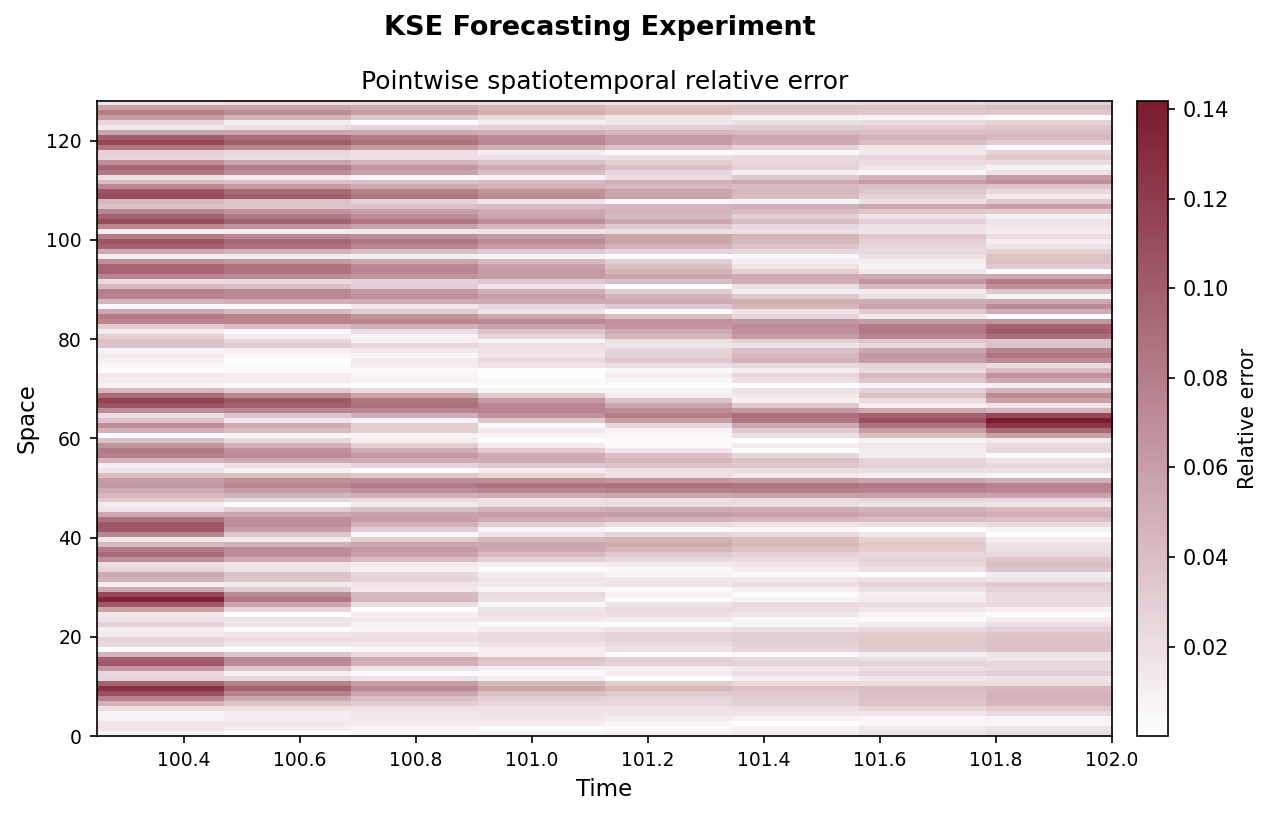}
    \caption{KS. Pointwise errors relative to the root mean square of the ground truth (computed in space) for the sample in the test set achieving the median relative $L^2$ error.}
    \label{fig:kse_spatiotemporal}
\end{figure}

\begin{figure}[htbp]
    \centering
    \begin{subfigure}{0.48\linewidth}
        \centering
        \includegraphics[width=\linewidth]{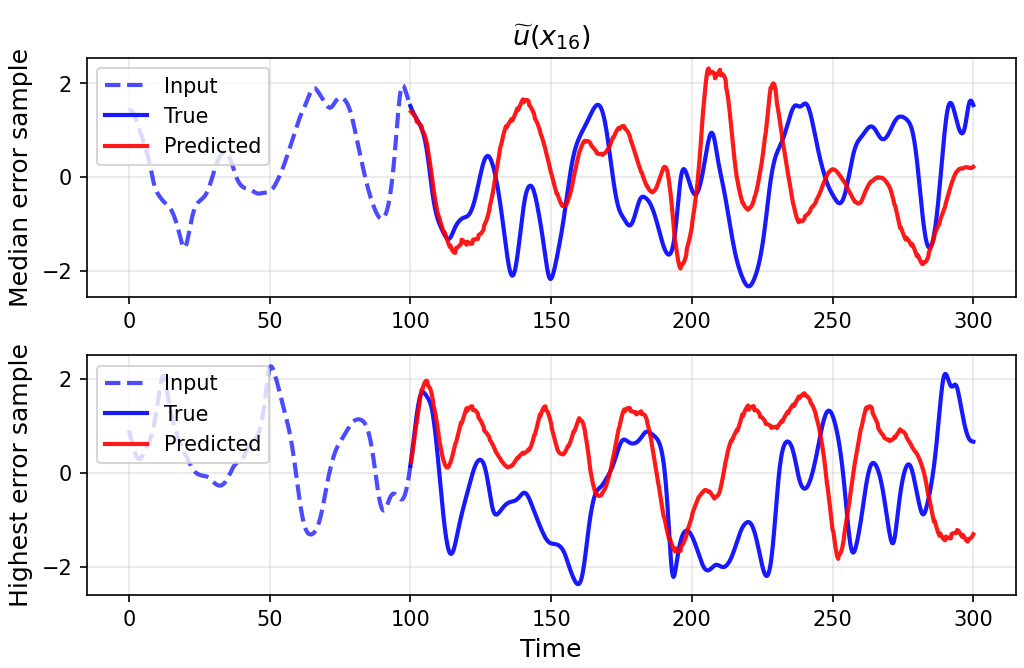}
        \caption{}
        \label{fig:kse_forecasting}
    \end{subfigure}\hfill
    \begin{subfigure}{0.48\linewidth}
        \centering
        \includegraphics[width=\linewidth]{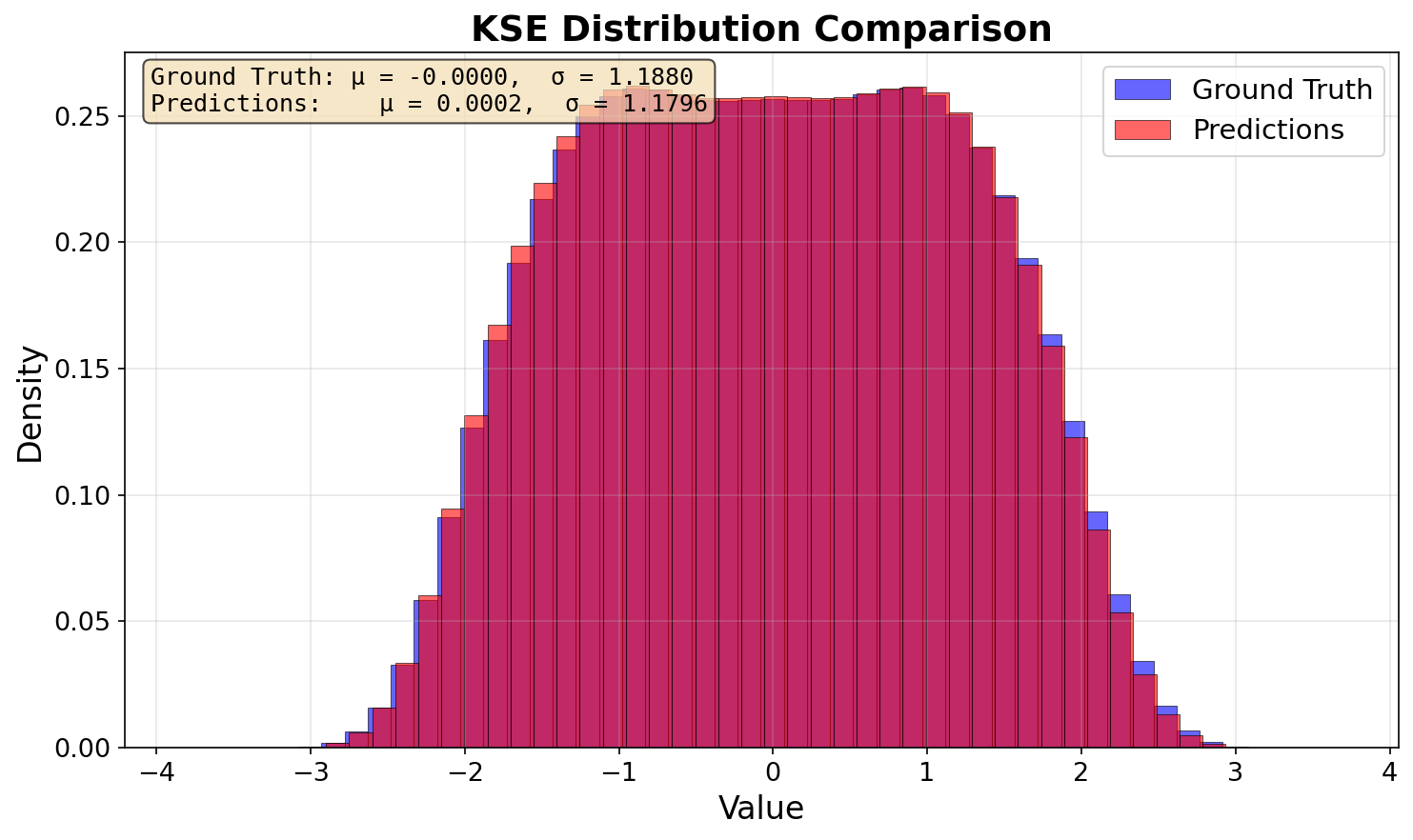}
        \caption{}
        \label{fig:kse_forecasting_hist}
    \end{subfigure}
    \caption{KS. Left: forecast obtained by composing the learned map on one-step median and worst-case relative $L^2$ error samples. Right: distribution of trajectory predictions under composition of the learned forecasting map; the dark shade of red represents overlap between the blue ground truth distribution and the light red predicted distribution. The prevalence of the dark shade of red thus indicates matching of distributions.}
    \label{fig:kse_comparison}
\end{figure}

\section{Conclusions and Future Directions}
\label{sec:conclusions}

In this work we have formulated and analyzed a model for fully data-driven data assimilation, and
the smoothing and forecasting problems in particular; we have also defined neural operator algorithms to tackle the problem in practice. We show that in the context of dynamical systems, under an observability-rank condition, there exists a continuous operator mapping observations to unobserved or predicted quantities. We formulate universal approximation theorems  establishing the existence of neural operator parametrizations approximating the smoothing and forecasting operators up to arbitrary accuracy. We have further demonstrated these capabilities in practice by deploying transformer neural operators for these problems in the context of the Lorenz `63, Lorenz `96, and Kuramoto-Sivashinsky dynamical systems. 

The theory developed in this paper represents a step towards the analysis of data assimilation and forecasting methods that are model agnostic, avoiding the need for specified dynamics and possibly expensive model evaluations. This provides, for example, a first mathematical underpinning for the development of the direct-observation state estimators and direct-observation forecasts, which constitute the frontier of the AI-weather forecasting domain; see \cite{Allen2025} and \cite{alexe2024graphdop}, respectively. Nonetheless, many open challenges remain, 
and we highlight several interesting avenues for future work.
\begin{enumerate}
    \vspace{1ex}
    \item In \Cref{ex:Lorenz} we show that the observability-rank condition is satisfied by the Lorenz `63 dynamical system whenever $x_0\neq 0$. Characterizing for which points the Lorenz `96 dynamical system and Kuramoto-Sivashinsky equation satisfy this condition constitutes an avenue for further work. A similar analysis for the Navier-Stokes equation could lead to theory directly applicable to computational fluid dynamics and the atmospheric sciences.
    \vspace{1ex}
    \item In this paper we have not exploited any specific structural properties
    of the dynamics; but doing so might lead to more detailed understanding
    of the maps defining the smoothing and forecasting problems. For example
    the property of synchronization \cite{pecora1990synchronization} might be
    exploited, or the existence of an inertial manifold \cite{temam2012infinite}.
    %Several chaotic dynamical systems of interest in the data assimilation context exhibit the property of synchronization \cite{pecora1990synchronization}. Synchronization of trajectories of coupled chaotic dynamical systems occurs, for example, in the setting of \eqref{eq:dynamical_system_intro} supplemented with additive linear terms $A_pp$ and $A_qq$ for negative definite $A_q$, and with Lipschitz conditions on the functions $f,g$. An investigation of the interplay between the universal approximation properties of neural operators and the synchronization properties of the dynamics constitutes an interesting avenue for future work.
    \vspace{1ex}
    \item As mentioned in \Cref{rem:global} it is of interest to extend the local existence results in this paper to show under a different set of assumptions the existence of an operator mapping observations to unobserved variables, defined on the whole domain of the dynamics. Applying some of the ideas in \cite{sauer1991embedology} constitutes a promising avenue for future work.
    \vspace{1ex}
    \item It is of interest to perform a detailed numerical study comparing the performance of different neural operator architectures when applied to the data assimilation problems presented in this paper. Furthermore, an accuracy-complexity tradeoff analysis between neural operators and non learning-based smoothing algorithms would be of interest.
    \vspace{1ex}
    \item It is of interest to study discrete time dynamical systems from the perspective developed in this paper, and to use this setting to make concrete connections to Takens' embedding theorem \cite{takens2006detecting,sauer1991embedology}.
\end{enumerate}

\appendix

\section{Observability-Rank Condition in Control Theory} 
\label{appendix:observability_control}

In the following discussion we explain the connection between the observability-rank condition, employed in this paper, and the observability-rank condition from control theory, introduced in the seminal paper \cite{hermann1977nonlinear}. In the general control-theoretic context, consider the partially observed dynamical system
\begin{align}
\label{eq:hermann_ds}
    \begin{aligned}
        \dot{x} &= \phi(x, u),\\
        y &= h(x),
    \end{aligned}
\end{align}
where $u(t) \in \Omega\subset\R^l$ for each $t \in \R^+$, $x \in M$, a $C^{\infty}$ connected manifold of dimension $m$, $y \in \R^n$, and $\phi,h$ are $C^{\infty}$ functions. For simplicity, in what follows we
consider the setting where $M=\R^{d_p + d_q}$ and $h: \R^{d_p+d_q} \to \R^{d_p}$. The following links the main body of the paper to this setting.

\begin{example} \label{ex:ex1}
The partially observed dynamical system \eqref{eq:dynamical_system_intro} we consider in this paper may be written in the form
\begin{align}
\label{eq:hermann_ds_ours}
    \begin{aligned}
        \dot{p} &= f(p,q),\\
        \dot{q} &= g(p,q),\\
        p &= \pi_p(p,q),
    \end{aligned}
\end{align}
This is a specific example of the control theoretic setting with $M=\R^{d_p + d_q}$, the external control $u$ absent, $x=(p,q)$, $\phi=(f,g)$ and $h = \pi_p$.
\end{example}

We now introduce the technical notions required to define the control theoretic
version of the observability-rank condition from \cite{hermann1977nonlinear}; these definitions may be found 
in \cite{hermann1977nonlinear} and are developed for general $C^\infty$ manifolds $M$ and general $C^\infty$ observation operators $h:\R^m\to \R^n$. Here we work
in the setting of $M=\R^{d_p + d_q}$ and $h:\R^{d_p+d_q}\to \R^{d_p}$, aligning with the Example \ref{ex:ex1},
and in particular with the contents of our paper.

Every point $\mathsf{u} \in \Omega$ defines a vector field $x\mapsto \phi(x, \mathsf{u}) \in C^\infty(\R^{d_p+d_q};\R^{d_p+d_q})$; we let $\mathcal{F}^{0}\subset C^\infty(\R^{d_p+d_q};\R^{d_p+d_q})$ denote the collection of all of these vector fields, one for each $\mathsf{u} \in \Omega$.
We let $\mathcal{H}^{0}$ denote the subset of $C^\infty(\R^{d_p+d_q})$ consisting of the $d_p$ component functions $h_1, h_2, \ldots, h_{d_p}:\R^{d_p+d_q} \to \R$ of the observation map $h:\R^{d_p+d_q}\to \R^{d_p}$, viewed as scalar-valued smooth functions on $\R^{d_p+d_q}$. We define $\mathcal{H}$ to be the smallest linear subspace of $C^\infty(\R^{d_p+d_q})$ containing $\mathcal{H}^{0}$ and which is closed with respect to Lie differentiation {under} elements of $\mathcal{F}^{0}$. For each $\varphi \in \mathcal{H}$, the differential $\sD\varphi(x) : \mathbb{R}^{d_p+d_q} \to \mathbb{R}$ is a linear functional on $\mathbb{R}^{d_p+d_q}$, i.e.\ an element of $(\mathbb{R}^{d_p+d_q})^*$. We define $\mathrm{d}\mathcal{H}^0 = \{\sD\varphi : \varphi \in \mathcal{H}^0\}$ and $\mathrm{d}\mathcal{H} = \{\sD\varphi : \varphi \in \mathcal{H}\}$ as collections of maps $\mathbb{R}^{d_p+d_q} \to (\mathbb{R}^{d_p+d_q})^*$, and $\mathrm{d}\mathcal{H}(x) \subseteq (\mathbb{R}^{d_p+d_q})^*$ as the subspace obtained by evaluating elements of $\mathrm{d}\mathcal{H}$ at $x$.

\begin{definition}[Observability-rank condition in \cite{hermann1977nonlinear}]
    The dynamical system \eqref{eq:hermann_ds} satisfies the \textit{observability-rank condition} at $\fx$ if $\dim(\rd\mathcal{H}(\fx)) = d_p+d_q$.
\end{definition}

With the translation between our paper and \cite{hermann1977nonlinear}, as described in Example \ref{ex:ex1}, we may formulate the following proposition.

\begin{proposition}
\label{prop:connection_control}
    The dynamical system \eqref{eq:dynamical_system_intro} satisfying \Cref{assump:regularity} with $f,g\in C^\infty$ and satisfying \Cref{assump:observability} at the point $(\fp,\fq)$ is an instance of a dynamical system \eqref{eq:hermann_ds} with a fixed control, satisfying the observability-rank condition from \cite{hermann1977nonlinear} at the point $\fx=(\fp,\fq)$.
\end{proposition}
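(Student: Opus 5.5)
The proof has two parts. First we place the system in the control-theoretic setting of Example \ref{ex:ex1}. Then we show that the rows of $\sD LF^{(n)}(\fp,\fq)$ lie in $\rd\mathcal{H}(\fx)$, so that full rank of this matrix forces $\dim(\rd\mathcal{H}(\fx)) = d_p+d_q$.

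\textbf{Embedding and the space $\mathcal{H}$.} Take the control set $\Omega$ to be a single point $\{\mathsf{u}_0\}$ and set $\phi(x,\mathsf{u}_0)=(f(x),g(x))$. Then $\mathcal{F}^0=\{(f,g)\}$ consists of the single smooth vector field $(f,g)$, and $h=\pi_p$. Smoothness of $\phi$ and $h$ follows from $f,g\in C^\infty$. With a single vector field, $\mathcal{H}$ is the linear span of the iterated Lie derivatives $\cL_{(f,g)}^m h_i$, for $m\ge 0$ and $i=1,\dots,d_p$. This span contains $\mathcal{H}^0$ and is closed under $\cL_{(f,g)}$; since any such space must contain these functions, it is also the smallest.

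\textbf{Components of $F^{(n)}$ lie in $\mathcal{H}$.} By definition \eqref{eq:Lie}, the first block of $F^{(n)}$ is $\fp=h(\fp,\fq)$, whose components are $h_i\in\mathcal{H}^0$. Next, $\cL_{(f,g)}h_i(\fp,\fq) = \sD h_i\,(f,g) = f_i(\fp,\fq)$, because $\sD\pi_p$ simply selects the $p$-components. Induction on $m$ then gives $\cL^{m}f_i=\cL^{m+1}_{(f,g)}h_i$. Hence every scalar component of $F^{(n)}$ has the form $\cL^{m}_{(f,g)}h_i$ for some $0\le m\le n$, and so belongs to $\mathcal{H}$. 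It follows that every row of the Jacobian $\sD F^{(n)}(\fx)$ is an element of $\rd\mathcal{H}(\fx)$.

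\textbf{Rank argument.} The $j$-th row of $\sD LF^{(n)}(\fx)=L\,\sD F^{(n)}(\fx)$ is a linear combination of rows of $\sD F^{(n)}(\fx)$. Since $\rd\mathcal{H}(\fx)$ is a linear subspace of $(\R^{d_p+d_q})^*$, this row also lies in $\rd\mathcal{H}(\fx)$. Linearity of $\rd\mathcal{H}(\fx)$ holds because $\mathcal{H}$ is a linear space and $\varphi\mapsto\sD\varphi(\fx)$ is linear. Assumption \ref{assump:observability} says these $d_p+d_q$ rows span a space of dimension $d_p+d_q$. Therefore $\dim\rd\mathcal{H}(\fx)\ge d_p+d_q$. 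The reverse inequality holds because $\rd\mathcal{H}(\fx)\subseteq(\R^{d_p+d_q})^*$, so equality follows.

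The only step needing care is the index bookkeeping in the identity $\cL^m f_i=\cL^{m+1}_{(f,g)}h_i$, in particular checking the base case $\sD\pi_p(f,g)=f$. The remaining steps are linear algebra.
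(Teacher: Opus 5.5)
Your proof is correct and follows essentially the same route as the paper. You embed the system with a trivial control so that $\mathcal{F}^0=\{(f,g)\}$, use $\cL_{(f,g)}\pi_p=f$ to show that every component of $F^{(n)}$ lies in $\mathcal{H}$, and conclude that the full-rank Jacobian $\sD LF^{(n)}(\fx)$ supplies $d_p+d_q$ independent elements of $\rd\mathcal{H}(\fx)\subseteq(\R^{d_p+d_q})^*$; the one cosmetic difference is that you take linear combinations of rows of $\sD F^{(n)}(\fx)$ (using that $\rd\mathcal{H}(\fx)$ is a linear subspace), whereas the paper observes directly that the components of $LF^{(n)}$ lie in the span $\mathcal{G}\subseteq\mathcal{H}$.
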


\begin{proof}[Proof of \Cref{prop:connection_control}]
We are in the specific setting of Example \ref{ex:ex1} and hence, because no control is present,
the set $\mathcal{F}^0$ simply comprises the vector field $(f,g)\in C^\infty(\R^{d_p+d_q};\R^{d_p+d_q})$. 
Letting $\fx = (\fp, \fq)$ and recalling the notation for Lie derivatives defined in \Cref{subsec:notation}, we have that
\begin{subequations}
\label{eq:LiePi}
\begin{align}
    \mathcal{L}_{(f,g)}(\pi_p)(\fx) &= \frac{\partial \pi_p}{\partial x}(\fx)\begin{pmatrix}
        f(\fx)\\
        g(\fx)
    \end{pmatrix}\\
    &= \begin{pmatrix}
        I_{d_p \times d_p} & \mathbf{0}_{d_p \times d_q}
    \end{pmatrix}\begin{pmatrix}
        f(\fp, \fq)\\
        g(\fp, \fq)
    \end{pmatrix} = f(\fp, \fq).
\end{align}
\end{subequations}
Therefore, higher order Lie derivatives of $\pi_p$ under the vector field $(f,g)$ yield $\cL^i_{(f,g)}\, f$. Let $\mathcal{G}$ be the span of the component functions 
$\{\pi_{p,1}, \ldots, \pi_{p,d_p}\} \cup \{\mathcal{L}^i_{(f,g)} f_j : 1 \leq j \leq d_p,\, 0 \leq i \leq n-1\}$,
where $\pi_{p,j} : \mathbb{R}^{d_p+d_q} \to \mathbb{R}$ denotes the $j$th component of $\pi_p$ and similarly $f_j$ denotes the $j$th component of $f$, so that the evaluations $\{\pi_{p,j}(\fp,\fq)\}_{j=1}^{d_p}$ and $\{\mathcal{L}^i_{(f,g)} f_j(\fp,\fq)\}_{j=1}^{d_p}$ correspond to the $(n+1)d_p$ entries of $F^{(n)}(\fp,\fq)$ in \eqref{eq:Lie}.

From \eqref{eq:LiePi} and closure of $\mathcal{H}$ under Lie differentiation under $(f,g)$, we have that $\cG \subseteq \mathcal{H}$; consequently, $\rd\mathcal{H}$ contains differentials of elements of $\cG$. The linear transformation $L:\R^{(n+1)d_p}\to \R^{d_p+d_q}$ from \eqref{eq:observability_eq_inv2} defines a map 
\begin{align}
    LF^{(n)} \colon (\fp, \fq) \mapsto [s_1(\fp, \fq), s_2(\fp,\fq), \ldots, s_{d_p + d_q}(\fp, \fq)]^{\top},
\end{align}
for $s_i \in \cG$. Since $\mathrm{d}\mathcal{H}(\fp,\fq) \subseteq (\mathbb{R}^{d_p+d_q})^*$, we have that $\dim(\rd\mathcal{H}(\fp, \fq)) \leq d_p+d_q$. Since $s_1, \ldots, s_{d_p+d_q} \in \mathcal{G} \subseteq \mathcal{H}$, their differentials 
$\sD s_1(\fp,\fq), \ldots, \sD s_{d_p+d_q}(\fp,\fq)$ are elements of $\mathrm{d}\mathcal{H}(\fp,\fq)$. 
Moreover, these differentials are the rows of the Jacobian $\sD(LF^{(n)})(\fp,\fq)$, 
which has rank $d_p+d_q$ by \Cref{assump:observability}, and hence are linearly independent. 
Therefore $\dim(\mathrm{d}\mathcal{H}(\fp,\fq)) \geq d_p+d_q$, and combined with the upper bound 
$\dim(\mathrm{d}\mathcal{H}(\fp,\fq)) \leq d_p+d_q$, we conclude that $\dim(\mathrm{d}\mathcal{H}(\fp,\fq)) = d_p+d_q$.
\end{proof}

\section{Architectural Details}
\label{appendix:architectures}

In this section we describe the transformer neural operator architectures employed in \Cref{sec:experiments},
highlighting the use of self- and cross-attention. The goal of this section is two-fold: to present the architectural details, but also to discuss how the architecture may be cast in the universal approximation context of \Cref{thm:UANeuralOperator}; recall that this proposition is used in the proof of \Cref{thm:UAsmoothing,thm:UAforecasting} which underpin the methodology developed and deployed in this paper. The first transformer neural operator uses self-attention. The second uses a combination of self-attention and cross-attention. The use of cross-attention enables the output of the architecture to be defined on a different number of grid points than the input. The self-attention based transformer neural operator we use is described in detail in \cite[Section 4.2]{calvello2024continuum}. A slight modification to the architecture implemented in practice leads to a universal approximation theorem of the form we apply in this paper (as in \Cref{thm:UANeuralOperator}), the statement of which may be found in \cite[Theorem 22]{calvello2024continuum}. We focus the following discussion on the cross-attention based transformer neural operator. To this end, we recall the definition of cross-attention in function space from \cite{calvello2024continuum}.

Let $D \subseteq \R^d$ and $E \subseteq \R^e$ be open sets. Let $u: D \to \R^{d_u}$ be a function,
$v: E \to \R^{d_v}$ another function, and $x \in E$, $y \in D$ points. Then, for learnable parameters $Q\in \R^{d_K\times d_v},K\in \R^{d_K\times d_u},V\in \R^{d_V\times d_u}$, cross-attention is defined as the operator acting on functions $v,u$ such that
\begin{equation}
\label{eq:cross_attention}
    \sC(v,u)(x) = \bbE_{y \sim \pi(y;v,u,x)} [V u(y)],
\end{equation}
for any $x\in E$, where the probability density function $\pi(\cdot\,;v,u,x): D \to \R^+$ is defined as
\begin{equation}
\label{eq:cross_attention_pdf}
    \pi(y; v,u, x) = \frac{\mathrm{exp} \Big ( \big\langle Q v(x), K u(y) \big\rangle_{\R^{\dK}} \Big )}{\int_{D} \mathrm{exp} \Big ( \big\langle Q v(x), K u(s) \big\rangle_{\R^{\dK}} \Big ) \: \mathrm{d}s},
\end{equation}
for any $y \in {D}$. We note that self-attention is a special case of cross-attention where
$D=E$ and $v=u$. In the following we let $\cV,\cU,\cW,\cZ$ denote arbitrary function spaces; furthermore, while cross-attention allows a more general definition, here we write the architecture as a mapping between functions over the common domain $D$. The cross-attention based transformer neural operator we employ may be written as the map $\Psi_\sC(\placeholder, \placeholder;\theta):\cV(D;\R^\ell)\times \cU(D;\R^r)\to \cZ(D;\R^{r'})$ acting on functions $v\in \cV(D;\R^{\ell})$ and $u\in \cU(D;\R^r)$ such that
\begin{equation}
\label{eq:transformer_operator_compact}
\Psi_{\sC}(v, u;\theta) \coloneqq \Bigl(\sT_{\textrm{out}}\circ\sD(v, \placeholder)\circ\sE_L \circ \sT_{\textrm{in}}\Bigr) (u,\theta).
\end{equation}
In practice, we define $\sT_{\textrm{in}}:\cU(D;\R^r)\to \cW(D;\R^c)$ via concatenation with the grid (positional encoding) and application of a learnable linear transformation that lifts to a latent channel dimension $c$. The operator $\sE_L: \cW(D;\R^c)\to \cW(D;\R^c)$ is defined via a composition of $L$ self-attention layers. Its construction is outlined in detail in \cite[Section 4]{calvello2024continuum}. The transformer decoder $\sD: \cV(D;\R^\ell)\times \cW(D;\R^c)\to \cW(D;\R^c)$ is defined for $(v,u)\in \cV(D;\R^\ell)\times \cW(D;\R^c)$ via the iteration
\begin{subequations}
\label{eq:recurrence_enc_def}
\begin{align}
    v &\mapsfrom S_{\mathrm{in}}v,\\
    v &\mapsfrom {W^\sD_1}v + \sC(v, v), \\
    v &\mapsfrom \sF_{\textrm{LayerNorm}}(v),\\
    \label{eq:reason}
    v &\mapsfrom {W^\sD_2}v + \sC(v, u), \\
    v &\mapsfrom \sF_{\textrm{LayerNorm}}(v), \\
    v &\mapsfrom {W^\sD_3}v + \sF_{\textrm{NN}} (v),\\
    v &\mapsfrom \sF_{\textrm{LayerNorm}}(v),
\end{align}
\end{subequations}
where $S_{\mathrm{in}}$ is a learnable linear transformation lifting to the channel dimension, $\sF_{\textrm{LayerNorm}}$ denotes application of layer normalization, $\sF_{\textrm{NN}}$ a two layer multilayer perceptron. We note that in practice, multihead cross-attention is employed; we refer to \cite[Section 4]{calvello2024continuum} for a discussion on multihead attention. We also note that in our implementation, we fix the linear transformation $W^{\sD}_1, W^{\sD}_2, W^{\sD}_3$ to be the identity. For implementation purposes, we define the operator $\sT_{\textrm{out}}:\cW(D;\R^c)\to \cZ(D;\R^{r'})$ as a linear transformation projecting to the output space.

Due to the definition of cross-attention and step \eqref{eq:reason} in the decoder, the discretization size of the output function corresponds to the discretization size of the input function $v$, making this architecture particularly attractive for forecasting problems, where the output time interval may be of length differing to the input. In practical application of this architecture, the $v$ is fixed to be the rescaled identity function, sampled at the grid points where the output function is to be evaluated. Since the $v$ is fixed, for the approximation theory developed in the following section we only require consideration of the operator $\Psi(v,\placeholder;\theta): \cU(D;\R^r)\to \cZ(D;\R^{r'})$. Because the architecture is the discrete version of the continuum neural operator, the scheme is discretization invariant in both inputs, meaning that the parameters will be independent of any discretization of input and output functions; the resulting scheme is thus deployable at any discretization. In the next subsection we state and prove a universal approximation theorem for a slight variant of the scheme implemented in practice.

\subsection{Universal Approximation for Cross-Attention}

Consider activation functions $\sigma \in C^\infty(\R)$ which are non-polynomial and Lipschitz continuous. We consider neural operators of the form \eqref{eq:transformer_operator_compact} where $\sT_{\textrm{in}}: \cU(D;\R^r)\to \cW(D;\R^c)$ and $\sT_{\textrm{out}}: \cW\bigl(D;\R^c\bigr)\to\cZ\bigl(D;\R^{d_z} \bigr)$ are defined by neural networks of the form 
\begin{align}
    \Bigl(\sT_{\textrm{in}}(u)\Bigr)(x) &= R_2\sigma\Bigl(R_1\bigl(u(x),x\bigr)+b_R \Bigr)+b_R',\\
    \Bigl(\sT_{\textrm{out}}(v)\Bigr)(x) &= P_2\sigma\Bigl(P_1\bigl(v(x),x\bigr)+b_P \Bigr)+b_P',
\end{align}
where $\bigl(u(x),x\bigr)\in \R^{r+d}$ and $\bigl(v(x),x\bigr)\in \R^{c+d}$, where $R_1,R_2,P_1,P_2$ are learned linear transformations of appropriate dimensions and $b_R,b_R',b_P,b_P'$ are learned vectors. We define the operator $\sE: \cW\bigl(D;\R^c \bigr) \to \cW\bigl(D;\R^c \bigr)$ as a two-step map acting on its inputs $u\in\cW(D;\R^c)$ as
\begin{subequations}
\label{eq:recurrence_enc_func_ua}
\begin{align}
    {u}(x) &\mapsfrom W_1^\sE{u}(x) + \sC\big({u},u; Q^\sE,K^\sE,V^\sE\big)(x), \\
    u(x) &\mapsfrom W_2^\sE{u}(x) + W_3^\sE \sigma\bigl(W_4^\sE{u}(x)+b_1^\sE\bigr) + b_2^\sE,
\end{align}
\end{subequations}
for any $x\in D$. Note that the layer thus defined does not include layer normalizations, and hence is a variant of the self-attention transformer layer employed in practice. We also define a no-layer normalization variant of the decoder in \eqref{eq:recurrence_enc_def}, which also includes a residual of the output of the encoder after application of cross-attention; indeed we consider $\sD: \cV(D;\R^\ell)\times \cW(D;\R^c)\to \cW(D;\R^c)$ defined for $(v,u)\in \cV(D;\R^\ell)\times \cW(D;\R^c)$ via the iteration
\begin{subequations}
\label{eq:recurrence_enc_def1}
\begin{align}
    v(x) &\mapsfrom S_{\mathrm{in}}v(x)\\
    v(x) &\mapsfrom {W^\sD_1}v(x) + \sC(v, v; Q^\sD_1,K^\sD_1,V^\sD_1)(x), \\
    v(x) &\mapsfrom {W^\sD_2}\bigl(v(x),u(x)\bigr) + \sC(v, u; Q^\sD_2,K^\sD_2,V^\sD_2)(x), \\
    v(x) &\mapsfrom {W^\sD_3}v(x) + W_4^\sD \sigma\bigl(W_5^\sD{v}(x)+b_1^\sD\bigr)+ b_2^\sD,
\end{align}
\end{subequations}
for any $x\in D$. We may now apply the result of \cite{lanthaler2023nonlocal} to show two universal approximation theorems for the resulting cross-attention based transformer neural operator.

\begin{theorem}
\label{thm:ua1}
Let $D\subset \R^d$ be a bounded domain with Lipschitz boundary, and fix integers $s,s'\geq 0$. If $\Psi^\dagger:C^s\bigl(\Bar{D};\R^r\bigr) \to C^{s'}\bigl(\Bar{D};\R^{r'}\bigr)$
is a continuous operator and $K\subset C^s\bigl(\Bar{D};\R^r\bigr)$ a compact set, then for any $\epsilon>0$, there exists a cross-attention based transformer neural operator 
$\Psi(v,\placeholder;\theta):K\subset C^s\bigl(\Bar{D};\R^r\bigr) \to C^{s'}\bigl(\Bar{D};\R^{r'}\bigr)$ so that
\begin{equation}
\label{eq:ua_condition}
\sup_{u\in K} \left\|\Psi^\dagger(u) - \Psi(v,u;\theta)   \right\|_{C^{s'}} \leq \epsilon.
\end{equation}
\end{theorem}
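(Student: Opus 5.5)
The plan is to reduce \Cref{thm:ua1} to the self-attention universal approximation result, \Cref{thm:UANeuralOperator}, which is \cite[Theorem~22]{calvello2024continuum}. I will show that the cross-attention architecture can exactly reproduce any self-attention transformer neural operator of the class used there. Since $v$ is fixed, the map $u\mapsto\Psi(v,u;\theta)$ is just an operator on $\cU(D;\R^r)$, and it suffices to embed the self-attention class into it.

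\textbf{Step 1: obtain an approximant.} Given $\epsilon>0$, apply \Cref{thm:UANeuralOperator} to $\Psi^\dagger$ and $K$. This gives a self-attention transformer neural operator $\widehat\Psi=\widehat\sT_{\mathrm{out}}\circ\widehat\sE_L\circ\widehat\sT_{\mathrm{in}}$ with $\sup_{u\in K}\|\Psi^\dagger(u)-\widehat\Psi(u)\|_{C^{s'}}\le\epsilon$. Its lifting, projection and layers have the no-LayerNorm form used in \cite[Theorem~22]{calvello2024continuum}, which matches \eqref{eq:recurrence_enc_func_ua}.

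\textbf{Step 2: copy the encoder and neutralize the decoder.} Take $\sT_{\mathrm{in}}=\widehat\sT_{\mathrm{in}}$, $\sE_L=\widehat\sE_L$ and $\sT_{\mathrm{out}}=\widehat\sT_{\mathrm{out}}$. The decoder \eqref{eq:recurrence_enc_def1} must then act as the identity on the encoder output $w=\sE_L\sT_{\mathrm{in}}(u)$. I will set $V^\sD_1=0$ and $V^\sD_2=0$, so both attention terms vanish identically whatever the softmax weights are. I will choose $W^\sD_2(v(x),w(x))=w(x)$, which selects the encoder residual; this is exactly why the variant \eqref{eq:recurrence_enc_def1} includes the residual of $u$. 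For the MLP layer I take $W^\sD_3=I$, $W^\sD_4=0$ and $b^\sD_2=0$. The quantities $S_{\mathrm{in}}$ and $W^\sD_1$ become irrelevant. The decoder therefore returns $w$ exactly, and $\Psi(v,u;\theta)=\widehat\Psi(u)$ for every $u$. The bound \eqref{eq:ua_condition} then follows directly from Step 1.

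\textbf{Main obstacle.} The argument is only as solid as the architectural match between \cite[Theorem~22]{calvello2024continuum} and the encoder layer \eqref{eq:recurrence_enc_func_ua}, and this is what I would check most carefully. In particular, the cited result must use lifting and projection by one-hidden-layer networks, residual weights $W^\sE_i$, and no layer normalization. If the layers differ, for instance if the cited proof relies on averaging or nonlocal operations \`a la \cite{lanthaler2023nonlocal}, I would instead adapt that proof directly. In that route, attention with $Q=0$ reduces to the spatial average $\frac{1}{|D|}\int_D Vu$, so the architecture contains averaging neural operators. I would then invoke the universal approximation theorem of \cite{lanthaler2023nonlocal} for averaging neural operators on $C^s\to C^{s'}$ over compact sets, again with the decoder neutralized as in Step 2.

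A minor point to verify is that each component maps $C^s$ into $C^{s'}$ regularly enough, with $\sigma\in C^\infty$ and the positional encoding smooth in $x$, so that the composed operator is well defined in the stated spaces.
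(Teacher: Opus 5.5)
Your argument is correct, but it runs in the opposite direction from the paper's. The paper makes the \emph{encoder} trivial ($V^\sE=0$, $W^\sE_3=0$, $W^\sE_1=W^\sE_2=I$) and puts the work into the \emph{decoder}. Setting $Q^\sD_2=K^\sD_2=0$ and $V^\sD_2=I$ turns the cross-attention into the spatial mean $\frac{1}{|D|}\int_D u$. The residual $W^\sD_2=(0,W)$ carries $Wu(x)$, and the MLP with $W^\sD_3=0$, $W^\sD_4=W^\sD_5=I$, $b^\sD_2=0$ applies $\sigma$. The network therefore collapses to a one-layer averaging neural operator $u\mapsto\sigma\bigl(Wu+b_1+\frac{1}{|D|}\int_D u\bigr)$ between the lifting and projection networks, and \cite[Theorem 2.1]{lanthaler2023nonlocal} is invoked directly. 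You instead make the decoder an exact identity ($V^\sD_1=V^\sD_2=0$, $W^\sD_2=(0,I)$, $W^\sD_3=I$, $W^\sD_4=0$, $b^\sD_2=0$). This shows that the cross-attention class contains the self-attention class, so it inherits universality from \Cref{thm:UANeuralOperator}. Your route is modular: anything proved for the self-attention class transfers verbatim. However, it is only as rigorous as the architectural match with \cite[Theorem 22]{calvello2024continuum} that you flagged, though the paper presents its no-LayerNorm encoder layer as exactly such a variant, so the match is plausible. It also shows, in effect, that the cross-attention block is dispensable for universality. The paper's route depends only on the averaging-operator theorem rather than on the self-attention result. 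It makes the complementary point that the decoder's cross-attention, fed through the encoder residual, is by itself enough. Your fallback (averaging via self-attention with $Q=K=0$ in the encoder, decoder neutralized) is the mirror image of the paper's argument. Both versions hinge on the residual term $W^\sD_2\bigl(v(x),u(x)\bigr)$, which passes pointwise values of the encoder output into the decoder, and you correctly identified its role.
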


\begin{proof}
We begin by noting that for $Q^\sD_2,K^\sD_2=0$ and $V^\sD_2 = I$, the cross-attention mapping employed in the decoder reduces to 
\begin{equation}
\sC\bigl(v,u;Q^\sD_2,K^\sD_2,V^\sD_2 \bigr)(\placeholder) = \mathbb{E}_{y\sim \pi(y;v,u,\placeholder)}[V^\sD_2 u(y)] = \frac{1}{|D|}\int u(x)~\mathrm{d} x.
\end{equation}
For encoder weights $V^\sD=0$, $W_3^\sE=0$, $W_1^\sE=W_2^\sE=I$ and decoder weights $W^\sD_2=\bigl(0, W\bigr)$, $W^\sD_3 = 0$, $W^\sD_4=W^\sD_5=I$ and $b^\sD_2 = 0$, the composition of encoder \eqref{eq:recurrence_enc_func_ua} and decoder \eqref{eq:recurrence_enc_def1} reduces to the mapping 
\begin{equation}
\label{eq:nonlocal_NO}
    u(\placeholder) \mapsto \sigma\left(Wu(\placeholder) + b_1 + \frac{1}{|D|}\int u(x)~\mathrm{d} x \right).
\end{equation}
The existence of $W,R_1,R_2,P_1,P_2,b_1,b_R,b_R',b_P,b_P'$ so that $\Psi(v,\placeholder\,;\theta)$ satisfies \eqref{eq:ua_condition} then follows from \cite[Theorem 2.1]{lanthaler2023nonlocal}, which also involves the application of the universality result for two-layer neural networks of \cite[Theorem 4.1]{Pinkus_1999}.
\end{proof}

\section*{Acknowledgments}

AMS is supported by a Department of Defense (DoD) Vannevar Bush Faculty Fellowship (award N00014-22-1-2790).
Both EC's acknowledges support from the Vannevar Bush Faculty Fellowship held by AS; Calvello also
acknowledges support from the Resnick Sustainability Institute and Carlson the Caltech Von Karman Instructorship.

\bibliographystyle{siamplain}
\bibliography{references}

\end{document}